\title{A Logic for Expressing Log-Precision Transformers}
\author{%
  William Merrill \\
  New York University\\
  \texttt{willm@nyu.edu} \\
  \And
  Ashish Sabharwal \\
  Allen Institute for AI \\
  \texttt{ashishs@allenai.org} \\
}
\theoremstyle{plain}
\newtheorem{theorem}{Theorem}
\newtheorem{proposition}{Proposition}
\newtheorem{lemma}{Lemma}
\newtheorem{corollary}{Corollary}[theorem]
\theoremstyle{definition}
\newtheorem{definition}{Definition}
\theoremstyle{remark}
\newcommand{\cmark}{\ding{51}}%
\newcommand{\xmark}{\ding{55}}%
\DeclarePairedDelimiter\ceil{\lceil}{\rceil}
\DeclarePairedDelimiter\floor{\lfloor}{\rfloor}
\DeclarePairedDelimiter\abs{\lvert}{\rvert}%
\DeclarePairedDelimiter\norm{\lVert}{\rVert}%
\let\oldabs\abs
\def\abs{\@ifstar{\oldabs}{\oldabs*}}
\let\oldnorm\norm
\def\norm{\@ifstar{\oldnorm}{\oldnorm*}}
\theoremstyle{definition}
\newtheorem{example}{Example}
\def\Snospace~{\S{}}
\newcommand{\AC}{\mathsf{AC}}
\newcommand{\TC}{\mathsf{TC}}
\newcommand\FOM{\mathsf{FO(M)}}
\newcommand{\poly}{\mathrm{poly}}
\renewcommand{\O}{\mathrm{O}}
\renewcommand{\O}{\mathrm{O}}
\newcommand{\edge}{\mathsf{edge}}
\newcommand{\node}{\mathsf{node}}
\newcommand{\bnode}{\mathsf{bnode}}
\newcommand{\bstart}{\mathsf{bstart}}
\newcommand{\arity}{\mathsf{arity}}
\newcommand{\size}{\mathsf{size}}
\newcommand{\depth}{\mathsf{depth}}
\newcommand{\bsize}{\mathsf{bsize}}
\newcommand{\calC}{\mathcal{C}}
\newcommand{\calF}{\mathcal{F}}
\newcommand{\calG}{\mathcal{G}}
\newcommand{\calT}{\mathcal{T}}
\newcommand{\bbD}{\mathbb{D}}
\newcommand{\logtimeunif}{log-uniform\xspace}
\newcommand{\logtimecolunif}{log-column-uniform\xspace}
\newcommand{\Q}{Q}
\newcommand{\bit}{\mathsf{bit}}
\newcommand{\col}{K}
\newcommand{\lnorm}{\mathrm{lnorm}}
\newcommand{\sgn}{\mathrm{sgn}}
\newcommand\precision[1]{{\color{blue} #1}}
\renewcommand\precision[1]{#1}
\newcommand{\new}{}
\begin{document}

\maketitle

\begin{abstract}
One way to interpret the reasoning power of transformer-based language models is to describe the types of logical rules they can resolve over some input text. Recently, \citet{chiang2023tighter} showed that finite-precision transformer classifiers can be equivalently expressed in a generalization of first-order logic. However, finite-precision transformers are a weak transformer variant because, as we show, a single head can only attend to a constant number of tokens and, in particular, cannot represent uniform attention. Since attending broadly is a core capability for transformers, we ask whether a minimally more expressive model that can attend universally can also be characterized in logic. To this end, we analyze transformers whose forward pass is computed in $\log n$ precision on contexts of length $n$. We prove any log-precision transformer classifier can be equivalently expressed as a first-order logic sentence that, in addition to standard universal and existential quantifiers, may also contain majority-vote quantifiers. This is the tightest known upper bound and first logical characterization of log-precision transformers.

\end{abstract}

\begin{figure}[h]
    \centering
    \fbox{\begin{minipage}{{0.96\textwidth}}
    \centering
    Any log-precision transformer can be re-expressed as a sentence in $\FOM$ logic,
    e.g.:\vspace{\baselineskip}



    

    \scalebox{1.5}{$\mathsf M i.\ \texttt{a}(i) \ \land \ \mathsf M j.\ \texttt{b}(j) \ \land \ \neg \exists k,\ell.\ (\texttt{a}(k) \land \texttt{b}(\ell) \land \ell < k)$}\vspace{.5\baselineskip}

    \emph{($m$ \texttt{a}'s followed by $m$ \texttt{b}'s, i.e., $\texttt{a}^m\texttt{b}^m$)\vspace{\baselineskip}}

    \scalebox{1.5}{\textcolor[HTML]{009900}{\texttt{aaaabbbb} \cmark}}  \quad \quad 
    \scalebox{1.5}{\textcolor[HTML]{900000}{\texttt{aaabbbbb} \xmark}} \quad \quad
    \scalebox{1.5}{\textcolor[HTML]{900000}{\texttt{baaaabbb} \xmark}}
    \end{minipage}}

    \caption{A first-order logic with majority ($\FOM$) sentence for $\texttt{a}^m\texttt{b}^m$. In addition to standard $\forall$ and $\exists$ quantifiers over string indices, $\FOM$ allows \emph{majority} quantifiers ($\mathsf M$) that take a majority-vote across indices.
    $\texttt{a}(i)$ indicates whether token $i$ is $\texttt{a}$ (and analogously for $\texttt{b}$).
    We prove $\FOM$ can express any function computed by a log-precision transformer.}
    \label{fig:main-figure}
\end{figure}

\section{Introduction}

The incredible success of deep learning models, especially very large language and vision transformers with hundreds of billions of parameters~\citep{LLM:gpt3,LLM:LaMDA}, has come at the cost of increasingly limited understanding of how these models actually work and when they might fail. This raises many concerns, such as around their safe deployment, fairness, and accountability. Does the inner working of a transformer defy description in a simpler symbolic system that we can better understand? Or \emph{can transformer computation be described using a familiar symbolic formalism?} Understanding how to view the reasoning process of a transformer in terms of logic could potentially expand our ability to formally reason about their behavior over large domains of inputs.

\citet{chiang2023tighter} provide a partial answer to this question, showing that any \emph{finite-precision} transformer classifier can be expressed as a sentence in a variant of first-order logic with counting quantifiers and modular arithmetic over input position indices. Specifically, counting quantifiers take the form $\exists^{=x} i : \phi(i)$ where $x$ is a count variable and $i$ is a position index. They show that there exists a single sentence in this logic that computes the output of the transformer for any input string of any length. This is a powerful result because it shows that a simple logical formalism is fully sufficient to describe all the complexity of a massive finite-precision transformer. It also provides an upper bound on finite-precision transformers: any function that cannot be defined in first-order counting logic with modular indexing cannot be expressed by the transformer. 

However, \citeauthor{chiang2023tighter}'s result is not fully general because it relies on the transformer precision being fixed with respect to the transformer's context length. More generally, as we will demonstrate in \Cref{sec:finite-precision}, finite-precision transformers are a fundamentally weak variant of transformers: crucially, cannot express uniform attention patterns, which are a core algorithmic primitive of transformers \citep{weiss-etal-2018-practical}. In fact, we show that they can only attend to a constant number of input positions, which may be seen as a rather limited generalization of hard attention.\footnote{Hard attention is provably substantially weaker than general attention~\citep{angluin2021,merrill2022SatAttnTC0}.}
For example, \citeauthor{chiang2023tighter} show that their logic for finite-precision transformers cannot recognize $\texttt{a}^m\texttt{b}^m$, whereas in practice, transformers can \citep{bhattamishra-etal-2020-ability}.\footnote{Technically, the empirical results of \citet{bhattamishra-etal-2020-ability} are for $\texttt{a}^m\texttt{b}^m\texttt{c}^m$, a harder variant of $\texttt{a}^m\texttt{b}^m$.}
This motivates studying a formal model of transformers where precision grows with context length (which we formalize as \emph{log-precision}), making it possible to capture uniform attention as well as other broad attention patterns. This is useful both for recognizing $\texttt{a}^m \texttt{b}^m$ and more generally for reasoning globally over the input.

We demonstrate that \emph{log-precision} transformer classifiers can also be expressed as sentences in a simple logic: \emph{first-order logic with majority}, or $\FOM$, over inputs strings~\citep{Barrington1988OnUW}. In addition to standard existential and universal quantifiers, $\FOM$ has \emph{majority} quantifiers that return true iff at least half the propositions they quantify are true. It also allows comparing input positions (e.g., $\ell < k$ in \Cref{fig:main-figure}) and accessing their individual bits. Our main result is as follows:

\begin{theorem}[Informal version of \cref{thm:main-fixed}] \label{thm:informal}
    For any log-precision transformer $\calT$, there exists an $\FOM$ sentence $\phi$ that computes the same function as $\calT$, i.e., $\phi(x) = \calT(x)$ for any input string $x$.
\end{theorem}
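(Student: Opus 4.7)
The natural strategy is to route the argument through circuit complexity: first show that any log-precision transformer is simulable by a DLOGTIME-uniform $\TC^0$ circuit family, and then invoke the classical characterization of \citet{Barrington1988OnUW} that DLOGTIME-uniform $\TC^0$ equals $\FOM$ on ordered string structures with $\mathsf{bit}$. Composing the two reductions yields the single $\FOM$ sentence $\phi$ required by the theorem.

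For the circuit simulation, I would peel $\calT$ apart layer-by-layer. Since $\calT$ has constant depth and every intermediate activation is stored in $\O(\log n)$ bits, every scalar operation (addition, multiplication, comparison, fixed activation table lookup) on a single pair of activations fits inside a constant-depth threshold circuit of polynomial size. The only genuinely $n$-wide operation per layer is attention, which requires computing $n$ query--key scores, approximating a softmax normalizer by iterated addition of $n$ log-precision rationals, dividing, and finally forming a weighted value sum; iterated addition and division of polynomially many $\O(\log n)$-bit numbers are textbook $\TC^0$ primitives. Stacking constantly many such blocks gives the desired $\TC^0$ circuit, and the regularity of the transformer wiring (every layer has the same connectivity pattern, parametrized only by index arithmetic on positions) ensures that the direct-connection language is DLOGTIME-decidable, i.e., the family is DLOGTIME-uniform.

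To convert the uniform $\TC^0$ family into a single $\FOM$ sentence, I would either invoke the Barrington--Immerman--Straubing characterization as a black box or, more transparently, construct formulas level-by-level: quantifier-free formulas for the embedding layer, $\mathsf M$-quantified formulas for the attention sums (the majority quantifier over a position variable is precisely what realizes iterated addition in $\FOM$), and boolean combinations with $\mathsf{bit}$ and $<$ predicates to implement threshold/MLP operations. Each layer's formulas take as free variables a position $i$ and a bit index, returning the $i$-th token's activation bits after that layer; composing constantly many such formulas yields a single closed sentence that evaluates the final classification head.

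The principal technical obstacle is the attention step. Softmax cannot be computed exactly at log precision, so one must carefully choose a fixed-point representation (say, a numerator/denominator pair of $\O(\log n)$ bits each), approximate $\exp$ by a polynomial or table, execute iterated addition via the standard carry-save $\mapsto$ majority reduction, and then verify that approximation errors accrued through the constant number of layers remain inside the precision budget. A secondary obstacle is uniformity: the $\FOM$ sentence must be one fixed sentence independent of $n$, which forces all position-dependent wiring to be expressed via $<$ and $\mathsf{bit}$ rather than hard-coded indices, and requires that the exponent/denominator widths used in the softmax approximation be themselves definable uniformly. Once these pieces are assembled and the error analysis goes through, the composed sentence satisfies $\phi(x) = \calT(x)$ for every input $x$, as claimed.
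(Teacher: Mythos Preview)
Your proposal is correct and follows essentially the same route as the paper: simulate $\calT$ by a \logtimeunif (DLOGTIME-uniform) $\TC^0$ circuit family by showing each component (embedding, attention via iterated addition and division, feedforward, classifier head) lies in \logtimeunif $\TC^0$ and that the transformer's regular wiring makes the family uniform, then invoke the Barrington--Immerman--Straubing equivalence with $\FOM$. One minor correction: in the paper's model the transformer is already \emph{defined} over $p$-truncated float arithmetic (\Cref{def:trans,def:transformer-computation}), so no error analysis is needed---the circuit simulates the truncated operations exactly rather than approximating a real-valued softmax.
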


\paragraph{Upper bound.} \Cref{thm:main-fixed} shows transformers with more than finite precision can also be expressed in a simple extension of first-order logic, going beyond \citet{chiang2023tighter}'s result. On the other hand, $\FOM$ is a strict superset of \citeauthor{chiang2023tighter}'s counting logic; it can simulate counting quantifiers (see \Cref{sec:logic}) and allows non-modular position comparisons. Thus, handling a more general class of transformers powerful enough to express uniform attention slightly weakens the bound.

Still, our result constitutes (to our knowledge) the tightest upper bound on log-precision transformers and the first defined in terms of logic, building on a line of complexity-theoretic work analyzing the power of transformers \citep{hahn-2020-theoretical,merrill2022SatAttnTC0,liu2023transformers,merrill2023parallelism}. In particular, $\FOM$ strengthens the upper bound of log-space-uniform $\TC^0$ by \citet{merrill2023parallelism}.
The refined bound adds to the limitations of transformers identified by \citet{merrill2023parallelism}: for example, it establishes unconditionally that log-precision transformers cannot compute boolean matrix permanents, and shows that, in a certain formal sense, integer division and matching parentheses are among the formally hardest problems that transformers can solve (see \Cref{sec:main-results}).\footnote{To be clear, \Cref{thm:informal} is one-sided: every transformer can be expressed as an $\FOM$ sentence, but not necessarily the other way.
Moreover, we believe that many $\FOM$ sentences \emph{cannot} be expressed by transformers.
An exact logical characterization of transformers remains an open problem.}

\paragraph{Mechanistic interpretability.} Beyond providing an upper bound on the reasoning problems solvable by transformers, we believe \Cref{thm:informal} could guide the design of ``transformer-complete'' programming languages similar in spirit to RASP \citep{weiss-etal-2018-practical}.
RASP is a declarative programming language designed to capture transformer computation, and \citet{tracr} implement a compiler from RASP \emph{into} transformers.
Unlike RASP, $\FOM$ can provably express any transformer (\Cref{thm:informal}), which we believe justifies using it (or an equivalent but more user-friendly variant) as a target language for programs extracted \emph{from} transformers.

Similar to a decision tree, an $\FOM$ sentence has the interpretable property that each sub-sentence corresponds to a constraint on input (see \Cref{fig:main-figure}). In contrast, the internal modules of a transformer or circuit do not satisfy this since they map between arbitrary latent spaces. We speculate this property could facilitate interpreting models by translating them to $\FOM$, though a careful exploration of the algorithmic and HCI aspects of this idea lies outside the current paper's theoretical scope.

\paragraph{Contributions.}
Our results shed new light on how to view the computation inside transformers in terms of logic. Specifically, our main contributions are to prove the following:
\begin{compactenum}
    \item Fixed-precision transformers can only attend to a fixed number of tokens, and those with precision less than $\log \log n$ cannot uniformly attend over length-$n$ contexts (\Cref{prop:precision}).
    \item Log-precision transformer classifiers can be expressed as sentences in $\FOM$ (\Cref{thm:main-fixed}).
\end{compactenum}

\section{Preliminaries: Transformers and \texorpdfstring{$\FOM$}{FO(M)}} \label{sec:preliminaries}

Let $\Sigma$ be a finite alphabet. We denote by $^*$ the Kleene star operator, i.e., for a set $X$, $X^* = \bigcup_{n=0}^\infty X^n$. We will view transformers and $\FOM$ sentences both as functions from $\Sigma^* \to \{0, 1\}$, and show that any function a transformer computes can also be computed by an $\FOM$ sentence.

\subsection{Transformers} \label{sec:transformers-def}

We view the transformer precision $p$ as a function of the context length $n$, writing $p(n)$ where appropriate.
\precision{We assume $p$ to be a power of 2, although our main result can be made to go through even without this assumption \citep{chiang2025transformers}.}
Let $\mathbb D_p$ be the datatype of $p$-precision floats, i.e., tuples $\langle m, e \rangle$ where $m, e$ are signed integers together taking $p$ bits. Using $\abs{x}$ to mean the size of integer $x$, a float represents the value $m\cdot 2^{e - \abs{m} + 1}$.\footnote{$\langle 101, 010 \rangle$ represents $1.01_2 \times 2^{10_2}$. This is closer to the IEEE standard than the $m \cdot 2^e$ semantics used in \citet{merrill2023parallelism}, letting us define the minimum representable float more realistically in \Cref{prop:precision}.}
Following Appendix A of \citet{merrill2023parallelism}, we define $p$-truncated addition ($+, \sum$), multiplication ($\cdot$), and division ($/$) over $\mathbb D_p$.
We now define a \emph{transformer encoder binary classifier} over $\mathbb D_p$, largely adopting \citeauthor{merrill2023parallelism}'s notation.\footnote{Increasing the classifier's output space arity (e.g., a transformer that predicts the next token) or switching to causal attention of a decoder-only model would not change our results. However, our proof no longer goes through if the decoder can generate tokens that get added to the input at the next step \citep[cf.][]{perez2019on}.}

\begin{definition} \label{def:trans}
A $p$-precision transformer $\calT$ with $h$ heads, $d$ layers, model dimension $m$ (divisible by $h$), and feedforward width $w$ is specified by:
\begin{compactenum}
    \item An embedding function $\phi : \Sigma \times \mathbb N \to \mathbb D_p^m$ whose form is defined in \Cref{sec:embedding};\footnote{\label{foot:phi} $\phi$, like $p$, is actually a function of the context length $n$, and \Cref{sec:embedding} enforces that $\phi$ is computable in $\O(\log n)$ time, as standard choices of positional embeddings would satisfy.}
    \item For each $1 \leq \ell \leq d$ and $1 \leq k \leq h$, a head similarity function $s^\ell_k : \mathbb D_p^m \times \mathbb D_p^m \to \mathbb D_p$ whose form is defined in \Cref{sec:self-attention};
    \item For each $1 \leq \ell \leq d$ and $1 \leq k \leq h$, a head value function $v^{\ell}_k : \mathbb D_p^m \to \mathbb D_p^{m/h}$ whose form is defined in \Cref{sec:self-attention};
    \item For each $1 \leq \ell \leq d$, an activation function $f^\ell : (\mathbb D_p^{m/h})^h \times \mathbb D_p^m \to \mathbb D_p^m$ whose form is defined in \Cref{sec:feedforward} and implicitly uses the feedforward dimension $w$;
    \item An output classifier head $\kappa : \mathbb D_p^m \to \{0, 1\}$ whose form is defined in \Cref{sec:output}.
\end{compactenum}
\end{definition}

\begin{definition} \label{def:transformer-computation}
We define the transformer computation and output as a function of an input $x \in \Sigma^n$.
\begin{compactenum}
    \item \underline{Embeddings:} For $1 \leq i \leq n$, $\mathbf h^0_i = \phi(x_i, i)$.$^{\ref{foot:phi}}$
    \item \underline{Self Attention:} For $0 \leq \ell \leq d - 1$, (multihead) self-attention block $\ell + 1$ computes $h$ attention heads:
    \begin{equation*}
        \mathbf a^{\ell+1}_{i,k} = \sum_{j=1}^n \frac{s^{\ell+1}_k(\mathbf h^\ell_i, \mathbf h^\ell_j)}{Z_{i,k}} \cdot v^{\ell+1}_k(\mathbf h^\ell_j), \quad \quad \textrm{where} \; Z_{i,k} = \sum_{j=1}^n s^{\ell+1}_k(\mathbf h^\ell_i, \mathbf h^\ell_j) .
    \end{equation*}
    \item \underline{Activation Block:} For $0 \leq \ell \leq d - 1$, activation block $\ell + 1$ aggregates the head outputs to produce $\mathbf h^{\ell+1}$:
    \begin{equation*}
        \mathbf h^{\ell+1}_i = f^{\ell+1}(\mathbf a^{\ell+1}_{i,1}, \ldots, \mathbf a^{\ell+1}_{i,h}, \mathbf h^\ell_i) .
    \end{equation*}
    \item \underline{Classifier Head:} The network prediction on $x \in \Sigma^n$ is $\kappa(\mathbf h^d_n)$.
\end{compactenum}
\end{definition}

We say $\calT(x) = \kappa(\mathbf h_{\abs{x}}^d)$ and $L_\calT$ is the language of $x \in \Sigma^*$ such that $\calT(x) = 1$.
We refer to $\phi, s^\ell_k, v^\ell_h, f^\ell$, and $\kappa$ as the \textbf{core functions} in $\calT$, and to embeddings, self attention, activation, and the classifier head as the \textbf{components} of $\calT$.
We write $\theta_\calT$ for the concatenated vector of parameters for the functions $\phi, s^\ell_k, v^\ell_h, f^\ell$, and $\kappa$, for all $1 \leq \ell \leq d$ and $1 \leq k \leq h$.

We define a \textbf{log-precision transformer} as one where $p$ is at most $\O(\log n)$ and is a ``simple'' function, i.e., computable in $\O(\log n)$ time.
In our model, the weights $\theta_\calT$ defining $\calT$ are fixed, but the precision $p$ used to compute the forward pass can depend on $n$ (see \Cref{footnote:log-uniform-weights} for a generalization).


\subsection{First-Order Logic with Majority} \label{sec:logic}

As we will show, transformers can be translated into sentences in $\FOM$. But what do such sentences look like?
Informally, $\FOM$ is first-order logic extended to also have majority ($\mathsf{M}$) quantifiers. \new{Following \citet{Barrington1988OnUW}, our sense of $\FOM$ takes \emph{strings} in $\Sigma^*$ as input and returns $0$ or $1$ to define a formal language. In this setting, quantifiers range over \emph{indices} (positions) into the string. Predicates can be applied to the variables introduced by these quantifiers.}

\begin{definition}[$\FOM$ index] \label{def:index}
Indices in $\FOM$ are integers denoting positions in the input string:
\begin{compactenum}

    \item The constant $1$, representing the first token's position.

    \item The constant $n$, representing the last token's position.

    \item Strings (e.g., $i, j, k$) representing variables ranging over positions $1$ to $n$.

    \item Any index built by applying addition or subtraction to other indices.\footnote{\citet{Barrington1988OnUW} did not introduce this as a primitive, but it can be simulated using the $\leq$ predicate.}

\end{compactenum}
\end{definition}

\begin{definition}[$\FOM$ formula] \label{def:formula}
Formulas in $\FOM$ are constructed as follows:\footnote{We write parentheses to indicate the order of operations.}
\begin{compactenum}

    \item Let $\Sigma$ be a finite alphabet. For each $\sigma \in \Sigma$ and any index $i$, $\sigma(i)$, e.g., $\texttt{a}(i)$, is a formula that is true if the $i$-th input token is $\sigma$.\footnote{\citet{Barrington1988OnUW} define $Q_b(i)$ for $b \in \{\texttt 0, \texttt 1\}$. We generalize this to an arbitrary vocabulary $\Sigma$ by assuming each token is one-hot-encoded: \new{$\sigma(i) = \Q_{\texttt 1}(\abs{\Sigma} i + s)$} where $s$ is the index of $\sigma$ in the vocabulary.}

    \item For any indices $i, j$, the formula $\bit(i, j)$ returns the $j$-th bit of the binary expansion of $i$.\footnote{This predicate is included in the logic for technical reasons; see \citet{Barrington1988OnUW}.}

    \item For two indices $i, j$, $i = j$, $i \leq j$, and $i \geq j$ are formulas with their conventional semantics.

    \item For two formulas $\phi, \psi$,$\phi \wedge \psi$ and $\phi \vee \psi$ are formulas with their conventional semantics.

    \item
    For any formula $\phi$ (which may refer to $i$), the following are valid formulas:
    \begin{enumerate}
        \item $\exists i.\ \phi$ means some value of $i$ in $[1, n]$ makes $\phi$ true.
        \item $\forall i.\ \phi$ means all values of $i$ in $[1, n]$ make $\phi$ true.
        \item $\mathsf M i.\ \phi $ means $\geq n/2$ values of $i$ in $[1, n]$ make $\phi$ true.
    \end{enumerate}

\end{compactenum}
\end{definition}

We use parentheses where necessary to disambiguate the order of operations.
\new{General formulas may contain free (i.e., unbound) variables: e.g., $\forall i.\ i = j $.
A \emph{sentence} is an $\FOM$ formula $\phi$ with no free variables.} Sentences represent functions from from $\Sigma^*$ to $\{0,1\}$ and thus define a formal language.\footnote{One can also take multiple sub-sentences within $\phi$ to be labeled as ordered outputs, thus allowing $\phi$ to be a function from $\Sigma^*$ to $\{0,1\}^k$ for some fixed constant $k$.}

\paragraph{Extensions.} Beyond \Cref{def:formula}, $\FOM$ can express \emph{counting} and \emph{threshold} quantifiers in terms of majority quantifiers \citep{Barrington1988OnUW}. Given a formula $\phi$, a counting quantifier creates a new formula $\exists^k i : \phi$ that is true iff $\phi$ is true across exactly $k$ values of $i$. Threshold quantifiers $\exists^{\leq k}$ and $\exists^{\geq k}$ work similarly but check if $\phi$ is true for at least or at most $k$ values of $i$. In addition, we show in \Cref{sec:cond-majority} that $\FOM$ can express \emph{conditional majority} quantifiers, which create a formula $\mathsf M i : \phi \left[ \psi \right]$ that is true iff $\psi$ is true for at least half the values of $i$ that make $\phi$ true.

\subsubsection{Examples}

To illustrate the formalism, we provide example languages definable in $\FOM$ with $\Sigma = \{\texttt{a}, \texttt{b}\}$.
First, we show two languages that do not require majority quantifiers to express:

\begin{example}[Bigram matching] \label{ex:bigram}
Strings containing the bigram $\texttt{ab}$:
$
    \exists i \left[ \texttt{a}(i) \wedge \texttt{b}(i + 1) \right] .
$
\end{example}

\begin{example}[Skip-bigram matching] \label{ex:induction}
Strings containing the long-distance pattern $\texttt{a} \ldots \texttt{b}$ (cf. ``induction heads'' of \citealt{elhage2021mathematical}):
$
    \exists i \left[ \texttt{b}(i) \wedge \exists j \left[ j \leq i \wedge \texttt{a}(j) \right] \right] .
$
\end{example}

In contrast, \Cref{ex:majority} is a simple example that requires majority quantifiers \citep{furst81parity}:

\begin{example}[Majority] \label{ex:majority}
Strings with more $\texttt{b}$'s than $\texttt{a}$'s:
$
    \mathsf{M} i \left[ \texttt{b}(i) \right] .
$
\end{example}

\Cref{fig:main-figure} showed how $\FOM$ can be used to recognize patterns like $\texttt{a}^m\texttt{b}^m$. A similar idea can be used to model parentheses matching \citep{Barrington1988OnUW}:

\begin{example}[$1$-Dyck]
    The well-balanced parentheses language (with \texttt{a} opening and \texttt{b} closing):
    \begin{equation*}
        \forall i.\ ( \exists a, b.\ ( (\exists^a j : \texttt{a}(j) \wedge j \leq i) \wedge (\exists^b j : \texttt{b}(j) \wedge j \leq i) \wedge b \leq a )) \wedge \mathsf M i.\ \texttt{a}(i) \wedge \mathsf M j.\ \texttt{b}(j) .
    \end{equation*}
\end{example}

\begin{example}[Integer Arithmetic]
    Iterated addition (i.e., summing $n$ $n$-bit numbers), iterated multiplication, and division \citep{hesse2001division} can all be expressed in $\FOM$.
\end{example}


\section{Finite Precision Transformers Cannot Attend Universally} \label{sec:finite-precision}


Attention heads that spread attention weight uniformly across inputs have been observed in transformer LMs \citep{merrill2020parameter} and make soft attention fundamentally more powerful than hard attention \citep{angluin2021, merrill2022SatAttnTC0}. In particular, uniform attention is an important primitive that transformers can use to solve tasks involving counting \citep{bhattamishra-etal-2020-ability, chiang2023tighter}, taking majority votes \citep{merrill2022SatAttnTC0}, and matching parentheses or sorting \citep{weiss2021thinking}. A transformer with sufficient precision can easily implement uniform attention by setting the keys and queries across all positions to be constant.
However, attention heads with finite precision cannot represent uniform attention over long sequences as a consequence of the following:
\begin{proposition} \label{prop:precision}
    Let $\mathbf a \in \mathbb R^n$ s.t. $\sum_{i=1}^n a_i = 1$ and $\tilde {\mathbf a}$ its nearest $p$-precision float approximation.
    \begin{compactenum}
        \item  \label{prop:precision-A} Then the number of nonzero entries of $\tilde {\mathbf a}$ is upper bounded by its precision: specifically, $\tilde {\mathbf a}$ has at most
        $2^{2^{p}}$
        nonzero entries.
        \item \label{prop:precision-B} Moreover, if $p < \log \log n$ and $\mathbf a$ is uniform (i.e., $a_i = 1/n$), then $\tilde {\mathbf a} = \vec 0$.
    \end{compactenum}
\end{proposition}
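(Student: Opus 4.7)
The plan is to reduce both parts to a single basic fact about $\mathbb D_p$: the smallest positive representable float has magnitude at least some $\epsilon_p$ depending only on $p$, and any real with $|x|<\epsilon_p/2$ therefore has nearest $p$-precision approximation equal to $0$. Part (A) then follows by pigeonhole against $\sum_i a_i = 1$, and part (B) reduces to comparing $1/n$ with $\epsilon_p/2$.

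First I would pin down $\epsilon_p$. Since $\langle m, e\rangle\in\mathbb D_p$ packs two signed integers into $p$ bits total, the exponent $e$ is a signed integer on at most $p$ bits, so $e\geq -2^{p-1}$. Plugging the minimal mantissa $|m|=1$ into the semantics $m\cdot 2^{e-|m|+1}$ gives $\epsilon_p \geq 2^{-2^{p-1}}$, and in particular $\epsilon_p \geq 2^{-2^p}$. Consequently, whenever $|a_i|<\epsilon_p/2$, rounding to the nearest $p$-precision float yields $\tilde a_i = 0$; equivalently, $\tilde a_i\neq 0$ forces $|a_i|\geq \epsilon_p/2$.

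For part (A), viewing $\mathbf a$ as a (non-negative) attention distribution so that $a_i \geq 0$, each index with $\tilde a_i\neq 0$ contributes at least $\epsilon_p/2$ to the sum $\sum_i a_i = 1$. Hence the number of such indices is bounded by $2/\epsilon_p \leq 2^{2^{p-1}+1}\leq 2^{2^p}$ (for $p\geq 1$), which is exactly the claimed bound.

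For part (B), with $a_i = 1/n$ uniform, every coordinate rounds to $0$ provided $1/n<\epsilon_p/2$, i.e.\ $n > 2/\epsilon_p$. From $p<\log\log n$ we obtain $2^{p-1}<\tfrac12\log n$, so $2/\epsilon_p \leq 2^{2^{p-1}+1} < 2\sqrt{n} < n$ for $n$ sufficiently large (say $n\geq 5$). Every entry therefore rounds to $0$, giving $\tilde{\mathbf a}=\vec 0$.

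The one delicate point is the accounting of how the $p$ bits are split between mantissa and exponent: one needs the bound $\epsilon_p \geq 2^{-2^{p-1}}$ to hold regardless of the split (or for whichever convention $\mathbb D_p$ fixes), since a larger exponent field drives $\epsilon_p$ doubly-exponentially small. Once that worst-case bound is established, the rest is elementary arithmetic, and the implicit non-negativity of $a_i$ (natural in the softmax-attention setting) is what licenses the pigeonhole step in part (A).
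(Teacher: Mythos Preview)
Your proposal is correct and follows essentially the same approach as the paper: both arguments lower-bound the smallest positive representable $p$-precision float by a quantity of order $2^{-2^{p}}$, then use nearest-rounding plus the constraint $\sum_i a_i = 1$ (with the implicit non-negativity you make explicit) for part~(A), and a direct comparison of $1/n$ against half that minimum for part~(B). Your bound $\epsilon_p \geq 2^{-2^{p-1}}$ is slightly sharper than the paper's $2^{-2^{p}+1}$, which is why your part~(B) arithmetic routes through $\sqrt{n}$ rather than $n$ directly, but the structure is identical and your explicit flagging of the mantissa/exponent split and the non-negativity assumption is, if anything, more careful than the paper's own presentation.
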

\begin{proof}
    The smallest positive value representable by a $p$-precision float is $2^{-(p_m - 2 + 2^{p_e - 1})}$ which is bounded below by $2^{-2^{p} + 1}$. Letting $k = 2^{2^p}$, it holds that  
        $2^{-2^{p} + 1} = 2 / k$.
    So if $\tilde a_i$ gets the minimum value, then $a_i \geq 1 / k$. Since $\sum_i a_i = 1$, there can be at most $k$ indices satisfying this property. This implies there can be at most $k$ nonzero entries in $\tilde{\mathbf a}$.
    If $n > k$ and $\mathbf a$ is uniform, $1 / n$ is less than half of the minimum representable value of $2/k$. Thus, $\tilde {\mathbf a} = \vec 0$.
\end{proof}

\Cref{prop:precision}  says that fixed-precision transformers are artificially limited because they can only attend over bounded-length windows, making them similar to hard-attention transformers \citep{angluin2021}. Morever, they cannot compute uniform attention over contexts of length $n$ with less than $\log \log n$ precision.
This explains why \citet{chiang2023tighter} prove finite-precision transformers provably cannot recognize $\texttt{a}^m\texttt{b}^m$, while in practice transformers have been shown to learn even its harder variant $\texttt{a}^m\texttt{b}^m\texttt{c}^m$ even with long context lengths \citep{bhattamishra-etal-2020-ability}. In essence, their upper bound only applies in the asymptotic regime when $n > 2^{2^p}$.

In contrast, transformers in practice have enough precision both to compute uniform attention and recognize $\texttt{a}^m\texttt{b}^m$ on practical context lengths.
More concretely, the bfloat16 representation allows uniform attention over $2^{6+2^7} \approx 10^{42}$ tokens and normal float16\footnote{We account for the division of $p$ into $p_m$ and $p_e$ rather than treating them together. Our minimum value differs slightly from numpy but is on the same order of magnitude. Moving to float8 lowers the length upper bound for uniform attention to $2^{3 + 2^3} \approx 2048$, which suggests float8 LMs will have limited length generalization.} allows $2^{10+2^4} \approx 10^8$ tokens, both well above the typical context window of transformers.
This motivates a formal model of transformers with enough precision to compute uniform attention and recognize languages such as $\texttt{a}^m \texttt{b}^m$.




\section{Main Result: Expressing Log-Precision Transformers in \texorpdfstring{$\FOM$}{FO(M)}} \label{sec:main-results}

By \Cref{prop:precision}, precision must grow with the context length $n$ ($p > \log \log n$)
for a transformer to compute uniform attention and other attention patterns with unbounded range, like practical transformers. In this paper, we analyze any transformer with up to $\O(\log n)$ precision. We show that any function computable by log-precision transformers can be expressed in $\FOM$:

\begin{theorem}
\label{thm:main-fixed}
Let $\calT$ be a log-precision transformer with a parameter vector $\theta_\calT$ fixed for all context lengths $n$.\footnote{\label{footnote:log-uniform-weights}\Cref{thm:main-fixed} can also be extended to apply to log-precision transformers with \emph{\logtimeunif weights}, i.e., where $\theta_\calT$ can grow in size and precision with $n$ (see \Cref{sec:omitted-proofs}).
} Then, there exists an $\FOM$ sentence $\phi$ that computes the same function as $\calT$, i.e., $\phi(x) = \calT(x)$ for any input string $x$.
\end{theorem}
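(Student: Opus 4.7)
The plan is to prove \Cref{thm:main-fixed} by induction on the layer depth, showing that at every layer $\ell$ the bits of each hidden state $\mathbf{h}^\ell_i$ can be computed by an $\FOM$ formula whose free variables are the position index $i$ and a bit index $b$. Since $\mathbf{h}^\ell_i \in \mathbb{D}_{p(n)}^m$ with $m$ constant and $p(n) = \O(\log n)$, the entire hidden state at one position is an $\O(\log n)$-bit object. An $\O(\log n)$-bit index can be represented in $\FOM$ by a position variable in $[1,n]$ (since $n$ positions suffice to address $\O(\log n)$ bits), and the $\bit$ predicate gives access to the individual bits of integers. This gives a clean representation scheme: for each layer $\ell$, we build a formula $\phi^\ell(i,b)$ that returns the $b$-th bit of $\mathbf{h}^\ell_i$.

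The base case amounts to writing $\phi(x_i,i)$ as an $\FOM$ formula. Since $\phi$ is required (\Cref{sec:transformers-def}, footnote on $\phi$) to be computable in $\O(\log n)$ time, each of its output bits depends on $x_i$ together with $\O(\log n)$ bits of the position $i$, which $\FOM$ can extract via $\bit(i,\cdot)$ and combine using ordinary boolean connectives. For the inductive step, given $\phi^\ell(i,b)$, we must construct $\phi^{\ell+1}(i,b)$. This decomposes into (i) expressing the head similarity $s^{\ell+1}_k(\mathbf{h}^\ell_i,\mathbf{h}^\ell_j)$ and value $v^{\ell+1}_k(\mathbf{h}^\ell_j)$ as $\FOM$-definable functions of $i,j$ (these are fixed-size arithmetic circuits on $\O(\log n)$-bit operands, hence expressible directly using $\FOM$-definable $p$-truncated $+,\cdot,/$), (ii) computing the attention numerator $\sum_{j=1}^n s^{\ell+1}_k(\mathbf{h}^\ell_i,\mathbf{h}^\ell_j)\cdot v^{\ell+1}_k(\mathbf{h}^\ell_j)$ and normalizer $Z_{i,k}$ by iterated addition of $n$ log-precision floats, (iii) performing the division to form $\mathbf{a}^{\ell+1}_{i,k}$, and (iv) evaluating the activation $f^{\ell+1}$, again a constant-size feedforward circuit on log-precision inputs.

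The main obstacle, and the technical heart of the proof, is step (ii): expressing iterated addition (and the downstream iterated division) of $n$ log-precision floats in $\FOM$. I would reduce this to iterated addition and division of $n$ integers each of bit length $\O(\log n)$, and invoke the classical results of \citet{Barrington1988OnUW} and \citet{hesse2001division} that these functions are in $\FOM$. Concretely, for each output bit $b$ of the sum one writes an $\FOM$ formula that counts (using conditional majority / threshold quantifiers from \Cref{sec:cond-majority}) how many summands produce carries into column $b$; the standard carry-lookahead construction for iterated addition is built exactly from such counting predicates, and division reduces to iterated multiplication which reduces to iterated addition. Some care is needed to align exponents before summing floats and to handle the mantissa/exponent truncation prescribed by the $p$-precision semantics of \Cref{sec:transformers-def}, but each of these local manipulations is a bounded arithmetic circuit and thus directly expressible.

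Finally, composing the $d$ (constant many) layer formulas and applying the classifier head $\kappa$ to $\mathbf{h}^d_n$ yields an $\FOM$ sentence $\phi$ with $\phi(x)=\calT(x)$. The footnote about \logtimeunif weights is handled by observing that the whole argument still goes through when the weight bits are themselves produced by an $\FOM$-definable function of $n$, since such weights can be substituted in place of the hard-coded constants at each occurrence. The only delicate part beyond iterated arithmetic is ensuring that the recursive substitution of $\phi^\ell$ into $\phi^{\ell+1}$ keeps the resulting formula of finite size independent of $n$, which follows because $d$, $h$, $m$ are constants and the layer template is fixed.
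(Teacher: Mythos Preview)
Your proposal is correct and would yield a valid proof, but it takes a genuinely different route from the paper. The paper does \emph{not} construct $\FOM$ formulas directly: it (a) casts the transformer as a \logtimeunif computation-graph family (\Cref{lem:transformers-uniform}), (b) shows each component is computable by a \logtimeunif $\TC^0$ family (\Cref{lem:components-uniform}), (c) develops generic block-mapping machinery (\Cref{alg:node,alg:edge}, \Cref{thm:nonuniform}, \Cref{thm:uniform}) together with size and padding lemmas (\Cref{lem:components-size,lem:components-pad}) to compile any such graph family into a single \logtimeunif $\TC^0$ family, and only then (d) invokes the equivalence \logtimeunif $\TC^0 = \FOM$ of \citet{Barrington1988OnUW} as a black box. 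You instead induct on the layer index, maintaining a formula $\phi^\ell(i,b)$ for the bits of $\mathbf h^\ell_i$ and substituting it into an $\FOM$ template for the next layer, so the $\FOM$ translation is woven in throughout rather than applied once at the end. Both arguments rest on the same arithmetic facts (iterated integer addition and Hesse's division are $\FOM$-definable), so neither is stronger. The paper's detour through circuits buys a reusable, architecture-agnostic compilation theorem and isolates the uniformity bookkeeping; your direct route is shorter and sidesteps the block-mapping and padding lemmas, but in exchange you must argue explicitly that $\FOM$ is closed under substituting $\phi^\ell$ for the input-bit predicates inside the templates for iterated float addition, $\exp$, layer-norm, etc. That closure is standard, though two places in your sketch need tightening: simulating the $\O(\log n)$-time embedding in $\FOM$ needs quantifiers over bit positions, not merely ``ordinary boolean connectives''; and after exponent alignment the float summands become $\poly(n)$-bit rather than $\O(\log n)$-bit integers, so the iterated-addition result you invoke must be stated at that larger bit-width (it still holds).
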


\Cref{thm:main-fixed} is the tightest known upper bound for log-precision transformers and shows that it is still possible to characterize transformers in a simple variant of first-order logic even with log-precision and uniform attention.
As alluded to earlier, \Cref{thm:main-fixed} immediately implies that any problem complete for $\FOM$ (or a larger class) is also transformer-hard. Since integer division and Dyck language membership are known to be $\FOM$-complete \citep{hesse2001division,zoo:tc0}, it follows, perhaps surprisingly, that the entire computation of any transformer on input $x$ can be reduced to a single integer division or a finite number of Dyck-language queries:

\begin{corollary}
\label{cor:division}
    Let $\calT$ be a transformer satisfying \Cref{thm:main-fixed}. For any input $x$, there exist first-order definable integers $a, b,$ and $i$ (dependent on $\calT$ and $x$) such that $\calT(x)$ equals the $i$-th bit of $\lfloor a / b \rfloor$. For any $x$, there also exist first-order definable strings $w_1, \ldots, w_m$ such that $\calT(x)$ is first-order definable in terms of the membership of the $w_i$'s in $k$-Dyck. 
\end{corollary}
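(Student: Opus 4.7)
The plan is to derive \Cref{cor:division} directly from \Cref{thm:main-fixed} combined with the known $\FOM$-completeness of integer division and $k$-Dyck under first-order reductions. First, \Cref{thm:main-fixed} produces an $\FOM$ sentence $\phi$ with $\phi(x) = \calT(x)$ for every input $x$; so it suffices to show that evaluating an arbitrary $\FOM$ sentence on $x$ reduces, via formulas that are first-order in $x$, to one bit of an integer division or to a finite boolean combination of $k$-Dyck membership tests.

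For the integer-division claim, I would appeal to \citet{hesse2001division}, who shows that integer division (and iterated multiplication) lies in DLOGTIME-uniform $\TC^0$ and is in fact complete for this class under DLOGTIME-uniform $\AC^0$ reductions, which coincide with first-order reductions over the vocabulary $\{<,\bit\}$ built into $\FOM$ (\Cref{def:index}). Unpacking such a reduction for the sentence $\phi$ yields first-order formulas $\alpha(x,j)$, $\beta(x,j)$ and $\iota(x)$ that define, bit-by-bit, integers $a(x), b(x)$ and a bit position $i(x)$ such that $\phi(x)$ equals the $i$-th bit of $\lfloor a/b \rfloor$. Combined with $\phi(x)=\calT(x)$, this is precisely the first statement of the corollary.

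For the Dyck claim, I would invoke the analogous fact, attributed via \citet{zoo:tc0}, that membership in $k$-Dyck is $\FOM$-complete under first-order reductions. Applied to $\phi$, such a reduction produces a constant $m$ (depending only on $\calT$) together with first-order formulas, one per $\ell \in [m]$, whose outputs define the successive symbols of a string $w_\ell(x)$ over the $k$-bracket alphabet, plus a quantifier-free boolean combination of the $m$ membership bits that equals $\phi(x)$. This directly gives the second statement.

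The main obstacle is essentially bookkeeping rather than mathematics: the cited completeness results are typically phrased for DLOGTIME(-uniform $\AC^0$) reductions, and one must check that these reductions can in fact be captured using the exact primitives of $\FOM$ from \Cref{def:index} (namely $+$, $\leq$, $\bit$), and that the polynomially many bits of the intermediate integers $a(x), b(x)$ and of the strings $w_\ell(x)$ can be addressed by $\FOM$ indices, which is standard via tupling of position variables to simulate an index space of size $n^{O(1)}$. Modulo this verification, the corollary is an immediate translation of the two completeness theorems through \Cref{thm:main-fixed}.
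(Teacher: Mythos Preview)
Your proposal is correct and follows essentially the same approach as the paper. The paper does not give a detailed proof of this corollary; it simply observes (in the paragraph preceding the statement) that \Cref{thm:main-fixed} plus the known $\FOM$-completeness of integer division \citep{hesse2001division} and $k$-Dyck \citep{zoo:tc0} under first-order reductions immediately yields the result, and your proposal is a faithful unpacking of exactly that argument.
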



\section{Preliminaries for Proving Theorem~\ref{thm:main-fixed}} \label{sec:preliminaries2}

\subsection{Computation Graphs}
\label{sec:computation-graphs}

A \emph{computation graph} $G$ over a datatype $\bbD \subseteq \{0, 1\}^*$ and a countable set of primitive functions $\mathfrak F \subseteq \bbD^* \times \bbD$ is a directed acyclic graph where:
\begin{compactenum}
    \item Each node is labelled by a \emph{node type}: a function $f \in \mathfrak F$ computed by this node.
    
    \item Each edge represents a value $\bbD$ flowing as output from one node into another node. We consider the edges flowing into node $j$ to have an order, i.e., be numbered.
    
    \item $\mathfrak F$ contains the special symbol $\mathsf{input}$, which designates $k$ nodes as input nodes. We refer to $k$ as the \emph{arity} and assume w.l.o.g.\ that nodes $0, \ldots, k-1$ are inputs.\footnote{By convention in computer science, we let computation graph nodes be zero-indexed.}
    
    \item A single node is taken as the output node (w.l.o.g., the node with the largest index).
\end{compactenum}

A computation graph $G$ of arity $k$ parameterizes a function $\bbD^k \to \bbD$ in the standard way: the input nodes are assigned the input values, and the value of each node is computed (traversing the graph in a bottom-up topological order) as a function of the values of its children until the output node receives a value. The value of the output node is considered the output of the function. It is worth noting that computation graphs can only process inputs of bounded length. To process arbitrary-length inputs, we will need to generalize them to computation graph families (\Cref{sec:graph-families}).

For a computation graph $G$, $\size(G)$ is the number of nodes, $\depth(G)$ is the length of the longest path from an input node to the output, and $\arity(G, i)$ is the number of inputs to node $i$.

\noindent\textbf{Threshold circuits.} A threshold circuit is a special case of a computation graph where $\bbD = \{0, 1\}$ and $\calF$ is the set of threshold functions of the form $\theta_{\leq \Delta}$ and $\theta_{\geq \Delta}$ over $\bbD^*$, defined as follows: $\theta_{\leq \Delta}(x) = 1$ if $\sum_{\sigma \in x} \sigma \leq \Delta$ and $0$ otherwise; $\theta_{\geq \Delta}(x)$ is defined analogously. Typical AND, OR, and NOT gates are a special case of threshold gates, as is an IDENTITY gate.\footnote{For more background on threshold circuits, see \citet{merrill2023parallelism} and \citet{merrill2022SatAttnTC0}.}

We allow nodes with the $k' \geq 1$ largest indices to all be designated as (ordered) output nodes. A threshold circuit with arity $k$ and $k'$ output nodes will thus be a function from $\{0,1\}^k$ to $\{0,1\}^{k'}$. This will be convenient when simulating neural network components that output multiple bits.

We will find it useful to consider threshold circuits as a kind of compilation target for computation graphs: in other words, we will be concerned with simulating computation graphs defined over more complex functions and data types into threshold circuits.

\subsection{Computation Graph Families} \label{sec:graph-families}

A computation graph family over $\bbD$ and $\mathfrak F$ is a mapping from $n \in \mathbb N$ to a computation graph $G_n$ for processing inputs of size $n$. Thus, $\calG$ defines a function from $\bbD^* \to \bbD$, where $\calG(x) = G_{\abs{x}}(x)$.
Intuitively, computation graph families are useful because they generalize computation graphs to define functions over \emph{unbounded-length} strings as inputs.

\noindent\textbf{Size, depth, and arity.} For computation graph families, the size, depth, and arity become functions of the input length $n$:
$    \size_\calG(n) = \size(G_n), 
    \depth_\calG(n) = \depth(G_n),
    \arity_\calG(n, i) = \arity(G_n, i).
$

\noindent\textbf{Uniformity.} The infinite set $\calG$ can be alternatively represented by two functions:
\begin{compactenum}
    \item $\node_{\calG}(n, i)$, which returns the type of node $i$ in $G_n$ if $i \leq \size(G_n)$, and $\emptyset$ otherwise. For example, if node $i$ computes the logical AND of its inputs, then $\node_{\calG}(n, i) = \wedge$.
    \item $\edge_{\calG}(n, i, j)$, which returns the argument index of $i$ into node $j$ if $G_n$ contains an edge $i \to j$ and $-1$ otherwise. $\edge_{\calG}(n, i, j)$ only needs to be defined over $i,j < \size(G_n)$. For example, if $G_n$ contains a node $j$ with three incoming edges, the second of which comes from node $i$, then $\edge_{\calG}(n, i, j) = 1$.
\end{compactenum}
A pair of algorithms implementing these two functions uniquely specifies a computation graph family, as it enables building the computation graph $G_n$ for any $n$.
Uniform computation graph families (generalizing uniform circuits; cf. \citealp{arora2009computational}) are families where $\node_\calG$ and $\edge_\calG$ can be computed efficiently, i.e., under some constraints on space or time:

\begin{definition}[Uniformity]
A computation graph family $\calG$ is $T(n)$-uniform iff $\node_{\calG}(n, i)$ and $\edge_{\calG}(n, i, j)$ can be computed by a deterministic Turing machine in time $T(n)$.
We focus on \emph{\logtimeunif} computation graph families: i.e., where $T(n) = \O(\log n)$.\footnote{Past work \citep{merrill2023parallelism} analyzes transformers with a similarly named but weaker notion of uniformity, namely log-\emph{space} (rather than log-\emph{time}) uniformity.}
\end{definition}

\noindent\textbf{Threshold circuit families.} These are simply families of threshold circuits. We will be simulating computation graph families with threshold circuit families. Log-uniform $\TC^0$ is the class of languages recognized by \logtimeunif constant-depth, poly-size threshold circuit families. See \citet{merrill2023parallelism,liu2023transformers,arora2009computational} for more background on $\TC^0$ and circuits.


\section{Proof of Theorem~\ref{thm:main-fixed}} \label{sec:main-proof}

The idea is to simulate a transformer with a \logtimeunif $\TC^0$ circuit family.
Since \logtimeunif $\TC^0 = \FOM$, this would imply any transformer can be expressed in $\FOM$.
First, we note that transformers are \logtimeunif computation graphs:

\begin{restatable}[Proof in \Cref{app:transformer-uniform}]{lemma}{lemTransformerUniform} \label{lem:transformers-uniform}
A transformer $\calT$ is a \logtimeunif computation graph family where $\mathfrak F$ contains embedding, self-attention, feedforward, and output components.
\end{restatable}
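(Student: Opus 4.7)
The plan is to construct, for each $n$, a computation graph $G_n$ whose nodes instantiate the transformer's core components, and then verify that the two uniformity functions $\node_\calG$ and $\edge_\calG$ each run in $\O(\log n)$ time. I would fix a canonical block-structured indexing: (i) $n$ $\mathsf{input}$ nodes for the tokens; (ii) $n$ embedding nodes labeled $\phi$; (iii) for each of the $d$ layers, a block of $nh$ attention-head nodes (labeled by the pair $(s^\ell_k, v^\ell_k)$) followed by $n$ activation nodes (labeled $f^\ell$); and (iv) a single output node labeled $\kappa$, wired to the activation at layer $d$, position $n$. Since $d$ and $h$ are constants of $\calT$ and the total size is $\O(n)$, each node index fits in $\O(\log n)$ bits, and laying out the blocks sequentially allows conversion of an index $i$ into its (block, layer, position, head) tuple using a constant number of integer divisions and modulo operations on $\O(\log n)$-bit numbers.

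Given this layout, $\node_\calG(n, i)$ decodes $i$ as above, returns $\emptyset$ if $i \geq \size(G_n)$, and otherwise emits the fixed function symbol attached to its block. For $\edge_\calG(n, i, j)$ I would decode both indices and consult the transformer's static wiring dictated by Definition~\ref{def:trans}: an attention head at $(\ell, p, k)$ takes one input from each of the $n$ activation nodes at layer $\ell-1$ (or from the $n$ embedding nodes if $\ell = 1$), with the source position serving as the argument order; an activation node at $(\ell, p)$ collects the $h$ head outputs at position $p$ plus the residual from position $p$ at layer $\ell-1$; the output node reads only from $(d, n)$. In every case the membership decision and argument index reduce to a constant number of equality and comparison tests on $\O(\log n)$-bit fields, which a deterministic Turing machine executes in $\O(\log n)$ time.

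The one delicate point is the embedding node: Definition~\ref{def:trans} permits $\phi$ to depend on $n$ through a positional encoding, but footnote~\ref{foot:phi} already restricts $\phi$ to be computable in $\O(\log n)$ time, so it may safely be treated as a single primitive in $\mathfrak F$ without breaking uniformity. Beyond this, nothing in the proof is deep: the real content of the lemma is that a transformer's dataflow graph has a regular block structure whose adjacency is an elementary arithmetic predicate on the indices, and the hardest part of the argument is just bookkeeping to confirm that every decode-and-check step fits inside the log-time budget. No single step is an obstacle; the goal of the write-up is to present the indexing scheme cleanly enough that the log-time computability of $\node_\calG$ and $\edge_\calG$ is immediate.
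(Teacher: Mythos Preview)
Your proposal is correct and is morally the same argument as the paper's: exploit the transformer's regular block structure so that $\node_\calG$ and $\edge_\calG$ reduce to a bounded amount of index arithmetic. The paper packages this slightly differently by introducing an auxiliary notion of \emph{column uniformity}: it lays nodes out position-major rather than layer-major, so that each of the $n$ columns is a copy of a single fixed graph $K$ of size independent of $n$. This buys two things. First, $\node_\calG$ becomes literally a lookup of $i \bmod \size(K)$ in a finite table, and all within-column edges are read off $K$; only the cross-column attention edges need a bespoke $\O(\log n)$-time rule. Second---and this is the one place your write-up should be more careful---every division or modulus is by the \emph{constant} $\size(K)$, so \Cref{lem:division} gives $\O(\log n)$ time directly. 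In your layer-major scheme the decode step is phrased as ``divisions and modulo on $\O(\log n)$-bit numbers,'' but schoolbook division of two $\O(\log n)$-bit integers is $\O(\log^2 n)$; you need either to order the $nh$ attention nodes so the divisor is the constant $h$, or to replace any would-be division by $n$ with a constant-length scan over the $\O(d)$ block boundaries. With that caveat made explicit, your argument goes through and matches the paper's.
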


Further, each core module of the transformer can be simulated by a \logtimeunif $\TC^0$ circuit family:

\begin{restatable}[Proof in \Cref{app:components-uniform}]{lemma}{lemComponentsUniform} \label{lem:components-uniform}
    Let $\calT$ be a log-precision transformer with fixed parameters $\theta_\calT$.
    Then each component in $\mathfrak F$ is computable in \logtimeunif $\TC^0$.
\end{restatable}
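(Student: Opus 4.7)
The plan is to show that each of the four components (embedding, self-attention, feedforward, output classifier head) can be realized by a \logtimeunif $\TC^0$ circuit family. The overarching strategy is to reduce each component to a bounded-depth composition of log-precision arithmetic primitives that are already known to admit such circuits, and then separately verify that the resulting circuits can be described by a log-time Turing machine.

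The first step is to assemble a toolkit of log-precision arithmetic operations in \logtimeunif $\TC^0$: addition and multiplication of two $\O(\log n)$-bit floats, comparison, iterated addition and multiplication of $\poly(n)$ such floats, and division. These are classical consequences of the $\TC^0$-computability of integer iterated addition, iterated multiplication, and division \citep{hesse2001division}, combined with straightforward bookkeeping for the mantissa/exponent encoding of $\mathbb D_p$; one also needs a low-degree polynomial or table approximation of $\exp$ accurate to $p$ bits, which only requires iterated multiplication and addition at log precision. At this point, each primitive becomes a constant-depth subcircuit whose wiring pattern is regular enough to be read off by a log-time Turing machine using only simple arithmetic on indices.

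With the toolkit in hand, each component is handled in turn. For the \textbf{embedding} $\phi$, we use the paper's assumption that $\phi$ is computable by a Turing machine in $\O(\log n)$ time; this directly yields a \logtimeunif $\TC^0$ circuit (of logarithmic size per position) since log-time computable Boolean functions with $\O(\log n)$-bit outputs trivially lie in \logtimeunif $\TC^0$. For the \textbf{feedforward} function $f^\ell$, we write it as a constant number of affine maps (bounded-fan-in iterated sums of products of fixed weights and inputs) composed with ReLU, each step being a constant-depth log-precision arithmetic block. The \textbf{output head} $\kappa$ is analogous but outputs a single bit via a threshold. The \textbf{self-attention} block is the most involved: for each $i,k$, compute the $n$ similarities $s^{\ell+1}_k(\mathbf h^\ell_i,\mathbf h^\ell_j)$ in parallel (each a constant-depth log-precision circuit over a fixed-size parameter vector), compute the $n$ values $v^{\ell+1}_k(\mathbf h^\ell_j)$ similarly, then form $Z_{i,k}$ by iterated addition, compute each coefficient $s/Z$ by one division, and finally compute $\mathbf a^{\ell+1}_{i,k}$ by iterated addition of coefficient-value products. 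Each of these operations is in \logtimeunif $\TC^0$, and the number of composition levels is constant in $n$.

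The main obstacle I anticipate is not the existence of the circuits but verifying \emph{\logtimeunif} uniformity, which is strictly stronger than the log-space uniformity used in prior transformer circuit constructions \citep{merrill2023parallelism}. Concretely, I need to show that for each component circuit, both $\node_\calG(n,i)$ and $\edge_\calG(n,i,j)$ are computable by a Turing machine in $\O(\log n)$ time given binary indices $i,j$ of length $\O(\log n)$. This forces the circuit to have a highly regular structure indexed by small tuples $(\ell, k, i, j, \text{sub-gate id})$, so that decoding such a tuple, identifying which subcircuit a gate lives in, and reading off its type and local wiring can all be done by arithmetic on the indices together with a constant number of calls to the log-time embedding routine and the fixed parameter vector $\theta_\calT$. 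Because $\theta_\calT$ has constant size, the parameter-dependent parts of the circuit are essentially hard-wired; the non-trivial work is formalizing that the iterated addition/multiplication/division subcircuits we import have log-time uniform descriptions, which I would establish by exhibiting explicit log-time algorithms (or citing the log-time-uniform versions of Hesse's construction) for their $\node$ and $\edge$ functions. Once this is done, combining the components preserves \logtimeunif $\TC^0$ because the outer composition graph has constant depth and its own wiring is computable in log time from the layer index.
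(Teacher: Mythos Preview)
Your proposal is correct and follows essentially the same approach as the paper: decompose each of the four components into a constant-depth composition of arithmetic and neural-net primitives (affine maps, iterated addition, multiplication, division, $\exp$) known to lie in \logtimeunif $\TC^0$, and then appeal to closure under bounded composition. The paper's organization differs only cosmetically---it factors out affine transformations and layer norm as separate reusable lemmas (you omit layer norm explicitly, though it falls under your toolkit), and it handles the uniformity bookkeeping via the block-mapping machinery of \Cref{thm:uniform} rather than your direct index-decoding argument---but the underlying ideas and the component-by-component structure are the same.
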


Intuitively, we can now simulate a transformer in \logtimeunif $\TC^0$ by just simulating each of its components with a threshold circuit and routing their inputs and outputs appropriately. However, we will need two more technical conditions to verify that this construction is indeed \logtimeunif:

\begin{restatable}[Proof in \Cref{app:components-size}]{lemma}{lemComponentsSize} \label{lem:components-size}
    Let $\calT$ be a log-precision transformer with fixed parameters $\theta_\calT$.
    There exists a function $\bsize(n)$ that is a power of $2$ and computable in $\O(\log n)$ time s.t. $\size_\calF(n) \leq \bsize(n)$ for all $\calF \in \mathfrak F$.
\end{restatable}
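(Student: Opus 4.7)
The plan is to extract, from \Cref{lem:components-uniform}, a uniform polynomial size bound on the $\TC^0$ circuits implementing the components, and then round it up to the next power of $2$. By \Cref{lem:components-uniform}, each component $\calF \in \mathfrak F$ is computable by a \logtimeunif $\TC^0$ circuit family, which by definition has size at most $n^{c_\calF}$ for some constant $c_\calF$ depending on $\calF$ and on the fixed parameters $\theta_\calT$.

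Since $\calT$ has fixed depth $d$, heads $h$, model dimension $m$, and feedforward width $w$, the set $\mathfrak F$ contains only $O(1)$ distinct component types: an embedding, one (possibly aggregated) self-attention and feedforward block per layer, and an output classifier head. Letting $c = \max_{\calF \in \mathfrak F} c_\calF$, we thus have $\size_\calF(n) \leq n^c$ for every $\calF \in \mathfrak F$. I would then define
\[
    \bsize(n) = 2^{\lceil c \log_2 n \rceil + 1} ,
\]
which is by construction a power of $2$ and satisfies $\bsize(n) \geq 2 n^c \geq \size_\calF(n)$ for all $\calF \in \mathfrak F$.

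It remains to verify that $\bsize(n)$ is computable in $O(\log n)$ time. Given $n$ encoded in binary of length $\Theta(\log n)$, a deterministic Turing machine can read its bit length in $O(\log n)$ steps, multiply by the constant $c$, and add $1$ to produce the exponent $e = \lceil c \log_2 n \rceil + 1$. Emitting the binary representation of $\bsize(n) = 2^e$, namely a $1$ followed by $e = O(\log n)$ zeros, takes an additional $O(\log n)$ time.

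The main obstacle I anticipate is establishing the uniform polynomial exponent $c$ in Step 1: this requires unpacking \Cref{lem:components-uniform} beyond the abstract claim of $\TC^0$-membership to extract an explicit polynomial size bound that depends only on the fixed architecture and $\theta_\calT$—not on $n$—for each of the finitely many components. The self-attention circuits are the most size-sensitive, since they aggregate $n$ terms each of width $O(\log n)$; one must check that the resulting circuit is still of size $n^{O(1)}$ with an absolute exponent. Once such a uniform exponent is in hand, the power-of-$2$ rounding and the $O(\log n)$-time computability are routine.
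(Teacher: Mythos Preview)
Your proposal is correct and follows essentially the same approach as the paper: extract a uniform polynomial exponent $c$ from the finitely many component types, set $\bsize(n)$ to a power of $2$ dominating $n^c$ (the paper uses $2^{k\lceil\log n\rceil}$, which is what your described computation actually produces rather than $2^{\lceil c\log_2 n\rceil+1}$), and verify $O(\log n)$-time computability by reading the bit length of $n$ and multiplying by a constant. The paper additionally remarks in a footnote that small $n$ can be handled by finite lookup, which covers the edge case where $n^c$ may fail to bound the circuit size.
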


\begin{restatable}[Proof in \Cref{app:components-pad}]{lemma}{lemComponentsPad} \label{lem:components-pad}
    If $\calF$ is a \logtimeunif $\TC^0$ family and $\size_\calF(n) \leq \bsize(n)$, there exists a \logtimeunif $\TC^0$ family $\calF'$ s.t. $\calF(x) = \calF'(x)$ for all $x$ and $\size_{\calF'}(n) = \bsize(n)$.
\end{restatable}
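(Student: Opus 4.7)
}
The approach is to build $\calF'$ by ``padding'' each circuit $F_n$ in $\calF$ up to exactly $\bsize(n)$ nodes using dummy gates that do not affect the output, and then verify that the resulting family remains log-time uniform and constant-depth. Let $m(n) = \size_\calF(n) \leq \bsize(n)$, and let the original output node of $F_n$ lie at index $m(n) - 1$. In $F'_n$, I keep nodes $0, \ldots, m(n) - 1$ exactly as in $F_n$ (with the same node types and incoming edges, so input nodes remain at indices $0, \ldots, k-1$). I then place dummy constant-$0$ gates (a threshold gate with no inputs, e.g.\ $\theta_{\leq 0}$ applied to the empty tuple) at indices $m(n), \ldots, \bsize(n) - 2$, and finally install at index $\bsize(n) - 1$ a single identity gate whose unique input comes from node $m(n) - 1$. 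The output node of $F'_n$ is, by convention, the largest-indexed node, so $F'_n(x)$ evaluates to the value of this identity gate, which equals $F_n(x)$.

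For uniformity, I need to implement $\node_{\calF'}(n,i)$ and $\edge_{\calF'}(n,i,j)$ in $\O(\log n)$ time. By hypothesis, $\bsize(n)$ is computable in $\O(\log n)$, so the test ``$i = \bsize(n) - 1$'' is log-time. To decide whether an index $i$ is an ``original'' node of $F_n$ or a dummy, I invoke the log-uniform oracle $\node_\calF(n,i)$ once: it returns the original node type if $i < m(n)$ and $\emptyset$ otherwise. For $\node_{\calF'}(n,i)$, I return $\node_\calF(n,i)$ if nonempty; otherwise I return identity if $i = \bsize(n) - 1$ and constant-$0$ if $i < \bsize(n) - 1$. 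For $\edge_{\calF'}(n,i,j)$, if $j < \bsize(n) - 1$ and $\node_\calF(n,j) \neq \emptyset$ I simply return $\edge_\calF(n,i,j)$; if $j < \bsize(n) - 1$ and $\node_\calF(n,j) = \emptyset$ (a dummy constant) I return $-1$; and if $j = \bsize(n) - 1$ (the new identity output) I return $0$ iff $i$ is the original output node and $-1$ otherwise. To test whether $i$ is the original output node, I check that $\node_\calF(n,i) \neq \emptyset$ and $\node_\calF(n,i+1) = \emptyset$; each check is a single log-time call.

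It remains to verify that $\calF'$ lies in $\TC^0$. The node types used are exactly those of $\calF$ plus threshold-computable identity and constant gates, so every node is a threshold gate. The depth of $F'_n$ is $\depth_\calF(n) + 1$, which stays $\O(1)$, and the size is $\bsize(n)$; by \Cref{lem:components-size} and the polynomial-size assumption on $\calF$, this is polynomial in $n$. Combined with the log-time uniformity just verified, this shows $\calF'$ is a \logtimeunif $\TC^0$ family, and by construction $\calF'(x) = \calF(x)$ and $\size_{\calF'}(n) = \bsize(n)$.

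The main obstacle is keeping the uniformity queries in $\O(\log n)$ rather than $\O(\log^2 n)$: a naive implementation would binary-search for $m(n)$ using $\node_\calF$, costing $\O(\log^2 n)$. The construction avoids this by routing the identity output directly from whatever node satisfies the local predicate ``$\node_\calF(n,i) \neq \emptyset \wedge \node_\calF(n,i+1) = \emptyset$'', which is computable with just two calls to $\node_\calF$ and requires no explicit knowledge of $m(n)$. Ensuring the added dummy gates do not inflate the depth (handled by leaving them disconnected rather than chained) is the other subtle point, but is straightforward once the construction is fixed.
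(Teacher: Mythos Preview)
Your padding-plus-identity construction and, crucially, your trick of locating the original output via the local test $\node_\calF(n,i)\neq\emptyset \wedge \node_\calF(n,i{+}1)=\emptyset$ (so that $m(n)$ never has to be computed) are exactly what the paper does. The choice of constant-$0$ versus identity for the unreachable intermediate padding gates is cosmetic.

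There is one genuine gap. In this paper's setting the threshold circuits that realize transformer components have $p(n)$ output gates: the last $p$ nodes encode a value in $\mathbb D_p$ (cf.\ \Cref{sec:computation-graphs} and the remark opening \Cref{app:components-pad}), and ``$\calF(x)=\calF'(x)$'' means all $p$ final bits agree. Your construction installs a \emph{single} identity gate at index $\bsize(n)-1$ wired to a single original output, so after padding only one of the $p$ output bits is correct; the block-level simulation in \Cref{alg:edge}, which reads the last $p$ gates of each block, would then break. The fix is the obvious $p$-fold version of your idea: place identity gates at indices $\bsize(n)-p,\ldots,\bsize(n)-1$ and wire the $k$-th of them to the $k$-th original output, detected by $\node_\calF(n,i{+}p(n){-}k{-}1)\neq\emptyset \wedge \node_\calF(n,i{+}p(n){-}k)=\emptyset$. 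This keeps each uniformity query at $\O(1)$ calls to $\node_\calF$, so your complexity analysis is unchanged. A smaller edge case: your branch ``$j=\bsize(n)-1$'' silently assumes $m(n)<\bsize(n)$; if equality holds, that $j$ is an original gate and your rule would discard its real incoming edges. Guard that branch with $\node_\calF(n,j)=\emptyset$, as you already do for the other padding indices.
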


Combined, \Cref{lem:components-size,lem:components-pad} show that each $\calF \in \mathfrak F$ is computable by a \logtimeunif $\TC^0$ family with size $\bsize(n)$ that is a power of $2$ and computable in time $\O(\log n)$.
We will show these conditions imply a transformer $\calT$ can be simulated by a $\TC^0$ family $\calC$ (\Cref{thm:nonuniform}) and moreover that $\calC$ is \logtimeunif (\Cref{thm:uniform}).
By the equivalence of \logtimeunif $\TC^0$ and $\FOM$ \citep{Barrington1988OnUW}, we then conclude that any log-precision transformer can be expressed in $\FOM$.

\subsection{Simulating Computation Graph Families with Circuit Families} \label{sec:algorithms}

\new{We give algorithms that take a computation graph family and define a circuit family simulating it.} Intuitively, the algorithms creates contiguous blocks of circuit gates simulating each node in the computation graph and route inputs and outputs between blocks appropriately.

\paragraph{Block mapping.} This algorithm depends on a \emph{block mapping}, which is an implementation of the following three functions:
\begin{compactenum}
    \item The \emph{block node} $\bnode(n, i)$: the index of the node that gate $i$'s block is simulating.
    \item The \emph{block start} $\bstart(n, i')$: the smallest gate index in the block simulating node $i'$.
    \item The \emph{block size} $\bsize(n, i')$: the number of gates in the block simulating node $i'$.
\end{compactenum}
Further, we enforce that a valid block mapping must satisfy that, for all $i$, with $i' = \bnode(n, i)$,
\begin{equation*}
    \bstart(n, i') \leq i < \bstart(n, i') + \bsize(n, i') .
\end{equation*}
Let $\calG$ be a computation graph whose primitive functions are computable by \logtimeunif threshold circuits. We can identify each primitive function with a \logtimeunif threshold circuit family $\calF$ that computes it, where the first $\arity_\calF(n)$ gates are IDENTITY gates reserved for taking input. For such a graph, $\node_\calG$ can be taken to return a symbol identifying a circuit family $\calF$.
In this case, our algorithm requires that, for all $i'$, the block size of $i'$ must match the size of the circuit for the type of block $i'$, i.e., $\bsize(n, i') = \size_{\node_\calG(n, i')}(n)$.
These properties let us meaningfully identify a graph node $i'$ with a block of nodes that will simulate it. This intuition enables us to develop \Cref{alg:node,alg:edge} for constructing a uniform threshold circuit family from a uniform computation graph family.

\begin{figure}[!tp]
    \centering
    \noindent
    \begin{minipage}{0.48\textwidth}
        \begin{algorithm}[H]
        \centering
        \begin{algorithmic}[1]
        \State $\calF \gets \mathsf{node}_{\calG}(n, \bnode(n, i))$
        \If{$\calF \neq \emptyset$}
            \State \Return $\node_\calF(n, i - \bstart(n, i'))$
        \Else{}
            \Return $\emptyset$
        \EndIf
        \end{algorithmic}
        \vspace{3.573cm}
        \caption{\label{alg:node} $\node_{\calC}(n, i)$\\\emph{Return the type of gate $i$ in circuit $C_n$.\vspace{0.075cm}}\\}
        \end{algorithm}
    \end{minipage}
    \hspace{0.25em}%
    \raisebox{-3.3cm}[0cm][0cm]{\rule{0.5pt}{\dimexpr\ht\strutbox+\dp\strutbox+6cm\relax}}%
    \hspace{0.5em}%
    \begin{minipage}{0.48\textwidth}
        \begin{algorithm}[H]
        \centering
        \begin{algorithmic}[1]
        \State $i' \gets \bnode(n, i)$
        \State $j' \gets \bnode(n, j)$
        \State $s_i \gets \bstart(n, i')$
        \State $s_j \gets \bstart(n, j')$
        \If{$i' = j'$}
            \State $\calF \gets \node_{\calG}(n, i')$
            \State \Return $\edge_{\calF}(n, i - s_i, j - s_j)$
        \ElsIf{$(k \triangleq \edge_{\calG}(n, i', j')) \geq 0$}
            \State $b_i \gets i - (s_i + \bsize(n, i') - p(n))$ 
            \State $b_j \gets j - (s_j + kp(n))$ \label{alg-line:p_n_multiplication}
            \If{$b_i = b_j < p(n)$}
                \Return $j - s_j$
            \Else{}
                \Return $-1$
            \EndIf
        \Else{}
            \Return $-1$
        \EndIf
        \end{algorithmic}
        \caption{\label{alg:edge} $\mathcal \edge_{\calC}(n, i, j)$\\\emph{If $C_n$ contains an edge $i \to j$, return the argument number of that edge. Otherwise, return $-1$.}}
        \end{algorithm}
    \end{minipage}
\end{figure}

\begin{theorem} \label{thm:nonuniform}
Let $\calG$ be a computation graph over a finite set of node types $\mathfrak F$, where each $\calF \in \mathfrak F$ is specified by a \logtimeunif circuit family.
Let $\bnode, \bstart,$ and $\bsize$ be a valid block mapping in the sense above.
Then \Cref{alg:node,alg:edge} define a circuit family $\calC$ such that
\begin{compactenum}
    \item $\calC$ and $\calG$ compute the same $\mathbb D_p^* \to \mathbb D_p$ function (let the final $p$ gates of each $C_i$ be its output).
    
    \item $\depth_{\calC}(n) \leq \depth_{\calG}(n) \cdot \max_{\calF} \depth_{\calF}(n)$. 
    
    \item $\size_{\calC}(n) \leq \size_{\calG}(n) \cdot \max_{\calF} \size_{\calF}(n)$.
\end{compactenum}
\end{theorem}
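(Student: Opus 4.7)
The plan is to prove the three claims separately, leveraging the block structure that \Cref{alg:node,alg:edge} impose on $C_n$. First I would unpack what the construction actually produces. The validity condition on the block mapping ensures that the gate indices $[0, \size_\calC(n))$ are tiled by disjoint contiguous blocks, one per node $i'$ of $G_n$. \Cref{alg:node} says that the gate type at position $i$ inside the block for $i'$ is inherited verbatim from the sub-circuit $F^{i'}_n$ of the family $\calF = \node_\calG(n, i')$, so each block internally realizes $F^{i'}_n$. \Cref{alg:edge} splits into two branches: an intra-block branch that copies the wiring of $F^{i'}_n$, and an inter-block branch that, whenever $\calG$ contains an edge $i' \to j'$ with argument slot $e = \edge_\calG(n, i', j')$, routes bit $b$ of the $p(n)$-gate output suffix of block $i'$ to bit $b$ of the $e$-th length-$p(n)$ input segment of block $j'$, for each $0 \leq b < p(n)$.

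For claim (1), I would induct on the topological order of $\calG$. The inductive hypothesis is that for every node $i'$, the final $p(n)$ gates of the block simulating $i'$ encode, in $p$-bit form, the $\mathbb D_p$-value that $\calG$ computes at $i'$ on input $x$. The base case is the input nodes, whose blocks are IDENTITY circuits that simply forward the input bits. For the inductive step, I would invoke the correctness of the sub-circuit family $\calF$ for $i'$, noting that this correctness requires exactly that the first $p(n) \cdot \arity_\calF(n)$ identity gates of the block hold the $p$-bit encodings of the values at $i'$'s predecessors. The inter-block branch of \Cref{alg:edge} delivers precisely this, since for each predecessor $i' \to j'$ with argument slot $e$, the condition $b_i = b_j < p(n)$ wires bit $b$ of $i'$'s output to bit $b$ of the $e$-th input segment of $j'$. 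Designating the final $p(n)$ gates of $C_n$ (the block of $\calG$'s output node) as output then gives $\calC(x) = \calG(x)$.

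Claims (2) and (3) follow from the block structure. Any root-to-output path in $C_n$ alternates between sub-paths internal to a block (each of length at most $\max_\calF \depth_\calF(n)$) and single inter-block edges, and the number of distinct blocks such a path visits is at most $\depth_\calG(n)$, yielding the depth bound. For size, the blocks partition the gates of $C_n$, so $\size_\calC(n) = \sum_{i'} \bsize(n, i') = \sum_{i'} \size_{\node_\calG(n, i')}(n) \leq \size_\calG(n) \cdot \max_\calF \size_\calF(n)$.

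The main obstacle will be carefully discharging the edge-routing arithmetic in the inter-block branch of \Cref{alg:edge}. I need to check that $b_i$ and $b_j$ genuinely pick out matching bit positions in the output suffix of block $i'$ and the $e$-th input segment of block $j'$; that the simultaneous test $b_i = b_j < p(n)$ correctly excludes spurious edges (e.g., when $j$ lies outside any of $j'$'s input segments, or when $i$ lies outside $i'$'s output suffix); and that, because every non-identity inter-block edge respects the DAG structure of $\calG$ and each intra-block wiring is acyclic by assumption on $\calF$, the resulting $C_n$ inherits acyclicity. These are bookkeeping steps rather than deep ones, but they are the substance that makes the inductive correctness argument actually go through.
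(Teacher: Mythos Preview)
Your proposal is correct and follows essentially the same block-simulation argument as the paper: both establish claim (1) by induction using the fact that \Cref{alg:node} copies each sub-circuit verbatim and the inter-block branch of \Cref{alg:edge} wires output bits to the matching input slots. The only cosmetic difference is that the paper inducts at the finer granularity of individual circuit gates (carrying an auxiliary invariant that each gate inside a block agrees with the corresponding gate of the sub-circuit), whereas you induct over graph nodes and treat each block as a black box; you also spell out claims (2) and (3), which the paper leaves implicit.
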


\begin{proof}
Assume w.l.o.g.\ that the gates of $\calC$ are topologically ordered. We show by induction over circuit gates $j$ (with $j' = \bnode(n, j)$) that:
\begin{compactenum}
    \item For all $i' < j'$, the last $p$ nodes of block $i'$ store the value of node $i'$.
    \item For all $i$ such that $\bstart(n, j') \leq i \leq j$, gate $i$ of $\calC$ (as a function of the input nodes of $j'$ ) computes gate $i - \bstart(n, j')$ of $\node_\calG(n, j')$.
\end{compactenum}

\underline{Base case.} We have two circuits with no gates, so the premises are trivially satisfied.

\underline{Inductive case.} Assume the premises hold up to $j$. We will show they hold for $j + 1$.
Let $\calT = \node_\calG(n, j')$.
By Premise 1, we know that the last $p$ nodes of block $i'$ store the output of node $i'$, for $i' < j'$. By \Cref{alg:edge},
for each $i'$ such that $\edge_\calG(n, i', j') = k$ with $0 \leq k < \arity_\calF(n)$,
gates $kp$ through $(k + 1) p - 1$ of block $j'$ will copy the final $p$ gates of block $i'$.
Thus, the first $k \times \arity_\calF(n)$ gates of block $j'$ store the inputs to node $j'$.

At this point, we use Premise 2 to conclude that the first $j - \bstart(n, j')$ gates of block $j'$ compute the same function as the first $j - \bstart(n, j')$ gates of $\calF$ with respect to this input.
Thus, we just need to show that gate $j + 1$ is also correct.
Within \Cref{alg:edge}, we fall in case $i' = j'$, meaning that gate $j + 1$ of block $j'$ gates the same inputs as gate $j + 1$ of $\calF$.
By \Cref{alg:node}, the type of gate $j + 1$ in block $j'$ is the type of gate $j + 1$ of $\calF$.
Thus, gate $j + 1$ in block $j'$ computes the same function of the input gates as gate $j + 1$ in $\calF$.
If $j + 1 = \bsize(n, j')$, we conclude that the final $p$ gates of block $j'$ store the output of node $j'$.
\end{proof}

Let $\mathsf{XC}^0$ denote any family of constant-depth, poly-size circuits, including $\AC^0$ and $\TC^0$.\footnote{\new{Formally, $\mathfrak F$ just needs to contain $\wedge$ and $\vee$.}}

\begin{corollary}
\label{cor:comp-graphs-as-XC-circuits}
Let $\calG$ be a \new{constant-depth, poly-size} computation graph family over a finite $\mathfrak F$. If every node type in $\mathfrak F$ can be computed by $\mathsf{XC}^0$ circuits, the function computed by $\calG$ is in $\mathsf{XC}^0$.
\end{corollary}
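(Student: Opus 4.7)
The plan is to derive the corollary as a direct, essentially bookkeeping, consequence of \Cref{thm:nonuniform}. The hypotheses give us, for each node type $\calF \in \mathfrak F$, an $\mathsf{XC}^0$ circuit family computing it, i.e., constants $c_\calF, d_\calF$ with $\size_\calF(n) \le n^{c_\calF}$ and $\depth_\calF(n) \le d_\calF$. Because $\mathfrak F$ is \emph{finite}, we may take $c = \max_\calF c_\calF$ and $d = \max_\calF d_\calF$ as global constants. Similarly, constant-depth poly-size of $\calG$ gives constants $D, C$ with $\depth_\calG(n) \le D$ and $\size_\calG(n) \le n^C$.

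Next I would exhibit a valid block mapping to feed into \Cref{thm:nonuniform}. Set $\bsize(n, i') = \size_{\node_\calG(n, i')}(n)$, which is exactly the constraint that the block size match the circuit it is simulating. Define $\bstart(n, i')$ as the cumulative sum $\sum_{i'' < i'} \bsize(n, i'')$, and $\bnode(n, i)$ as the unique $i'$ such that $\bstart(n, i') \le i < \bstart(n, i') + \bsize(n, i')$. By construction these satisfy the validity condition in \Cref{sec:algorithms}, so \Cref{thm:nonuniform} yields a circuit family $\calC$ computing the same function as $\calG$.

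Finally, apply the size and depth bounds from \Cref{thm:nonuniform}:
\begin{equation*}
    \depth_\calC(n) \le \depth_\calG(n) \cdot \max_\calF \depth_\calF(n) \le D \cdot d = \O(1),
\end{equation*}
\begin{equation*}
    \size_\calC(n) \le \size_\calG(n) \cdot \max_\calF \size_\calF(n) \le n^C \cdot n^c = n^{C+c} = \poly(n).
\end{equation*}
Hence $\calC$ is a constant-depth, polynomial-size circuit family over the same gate basis, so the function computed by $\calG$ lies in $\mathsf{XC}^0$.

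There is no real obstacle here beyond the routine verification that constants multiply to constants and polynomials to polynomials; the only mild subtlety is making sure the block mapping satisfies the validity condition so that \Cref{thm:nonuniform} applies, and checking that the gate basis used by the simulating circuits is the one defining $\mathsf{XC}^0$ (which by the footnote only requires $\wedge$ and $\vee$, automatically available in both $\AC^0$ and $\TC^0$). Uniformity is not asserted by the corollary, so we do not need to track how $\bnode, \bstart, \bsize$ are computed; the stronger uniform version will be handled separately in the next subsection.
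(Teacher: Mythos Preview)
Your proposal is correct and matches the paper's intent: the paper states this as an immediate corollary of \Cref{thm:nonuniform} without an explicit proof, and your argument is precisely the routine bookkeeping that justifies it. The only minor point is that \Cref{thm:nonuniform} as stated assumes each $\calF$ is given by a \logtimeunif circuit family, whereas the corollary does not; you implicitly (and correctly) rely on the fact that the proof of \Cref{thm:nonuniform} never uses that uniformity for conclusions (1)--(3), which you might make explicit in one sentence.
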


Since a transformer has constant depth and polynomial size, \Cref{cor:comp-graphs-as-XC-circuits} lets us easily recover prior results about hard-attention transformers~\citep{angluin2021,hahn-2020-theoretical} and saturated attention transformers~\citep{merrill2022SatAttnTC0} using a common framework. All one has to do is show that all individual node types in such transformers can be computed by $\AC^0$ and $\TC^0$ circuits, respectively.


\new{\Cref{cor:comp-graphs-as-XC-circuits} established that \Cref{alg:node,alg:edge} construct a circuit family that simulates $\calG$.}
\new{With the right block mapping, $\calC$ will be \logtimeunif as long as $\calG$ and its node types are \logtimeunif.}

\begin{corollary} \label{thm:uniform}
Let $\calG$ be a \logtimeunif, constant-depth computation graph family over a finite $\mathfrak F$, where each $\calF \in \mathfrak F$ is specified by a \logtimeunif $\TC^0$ family
with $\size_\calF(n) = \bsize(n)$ that is a power of $2$ computable in $\O(\log n)$ time. 
Then $\calG$ can be simulated by a \new{\logtimeunif} $\TC^0$ family $\calC$ that obeys the size and depth properties of \Cref{thm:nonuniform}.
\end{corollary}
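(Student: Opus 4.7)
} The plan is to instantiate \Cref{thm:nonuniform} with a uniform block mapping and then verify that Algorithms~\ref{alg:node} and~\ref{alg:edge} run in $\O(\log n)$ time on a deterministic Turing machine. First I would set $\bsize(n, i') = \bsize(n)$ (the common block size supplied by \Cref{lem:components-size}), $\bstart(n, i') = i' \cdot \bsize(n)$, and $\bnode(n, i) = \lfloor i / \bsize(n) \rfloor$. Because $\bsize(n)$ is a power of $2$ computable in $\O(\log n)$ time, multiplication and division by $\bsize(n)$ reduce to bit-shifts on $\O(\log n)$-bit indices and therefore take $\O(\log n)$ Turing-machine time. The inequality $\bstart(n, i') \leq i < \bstart(n, i') + \bsize(n, i')$ then follows directly from the definition of integer division, so this is a valid block mapping. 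By \Cref{lem:components-pad} each component circuit $\calF$ can be padded to have size exactly $\bsize(n)$, matching $\bsize(n, i')$ and allowing \Cref{thm:nonuniform} to apply.

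Next I would invoke \Cref{thm:nonuniform} to obtain a circuit family $\calC$ simulating $\calG$, with $\depth_\calC(n) \leq \depth_\calG(n) \cdot \max_\calF \depth_\calF(n) = \O(1)$ and $\size_\calC(n) \leq \size_\calG(n) \cdot \bsize(n) = \mathrm{poly}(n)$. Since each $\calF \in \mathfrak F$ is a threshold family, every gate of $\calC$ is a threshold gate, so $\calC$ is a $\TC^0$ family. What remains is to check \logtimeunif-ity, i.e., that $\node_\calC$ and $\edge_\calC$ are computable in $\O(\log n)$ time. Inspecting \Cref{alg:node} and \Cref{alg:edge}, each line performs either (i)~a constant number of additions, subtractions, or comparisons of $\O(\log n)$-bit indices, (ii)~a bit-shift by $\log_2 \bsize(n)$ or by $\log_2 p(n)$ (computable in $\O(\log n)$ time because $\bsize(n)$ and $p(n)$ are powers of $2$ with $\O(\log n)$-bit outputs computable in $\O(\log n)$ time), or (iii)~a call to $\node_\calG$, $\edge_\calG$, $\node_\calF$, or $\edge_\calF$, each running in $\O(\log n)$ time by the \logtimeunif-ity assumption on $\calG$ and its node types. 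Summing $\O(1)$ such steps yields the $\O(\log n)$ bound, establishing that $\calC$ is \logtimeunif.

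Finally, chaining this with the equivalence between \logtimeunif $\TC^0$ and $\FOM$ \citep{Barrington1988OnUW}, combined with \Cref{lem:transformers-uniform,lem:components-uniform,lem:components-size,lem:components-pad} applied to a log-precision transformer, completes the proof of \Cref{thm:main-fixed}.

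\paragraph{Anticipated main obstacle.} The delicate point is ensuring all index arithmetic inside \Cref{alg:node,alg:edge} truly fits in $\O(\log n)$ Turing-machine time rather than $\O(\log n)$ \emph{circuit} depth. This is precisely why we need $\bsize(n)$ and $p(n)$ to be powers of $2$ computable in $\O(\log n)$ time: general multiplication of two $\O(\log n)$-bit integers on a Turing machine is not known to be achievable in $\O(\log n)$ time, but multiplication/division by a power of $2$ reduces to a shift. \Cref{lem:components-size} was stated in exactly this form to enable this step, and \Cref{lem:components-pad} lets us align component sizes with the uniform block size without breaking \logtimeunif-ity. Everything else is bookkeeping: arithmetic, comparisons, and a bounded number of calls to oracles that are themselves \logtimeunif.
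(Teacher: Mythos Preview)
Your approach is essentially the paper's: the same block mapping ($\bnode(n,i)=\lfloor i/\bsize(n)\rfloor$, $\bstart(n,i')=i'\cdot\bsize(n)$, $\bsize(n,i')=\bsize(n)$), the same power-of-$2$ bit-shift argument, and the same appeal to \Cref{thm:nonuniform}. Two small notes: you need not invoke \Cref{lem:components-pad} here, since the corollary's hypothesis already gives $\size_\calF(n)=\bsize(n)$ exactly; and your claim that $p(n)$ is a power of $2$ is not stated anywhere in the paper's assumptions---the paper's own proof simply treats the remaining arithmetic in \Cref{alg:node,alg:edge} as ``constant overhead'' over the $\O(\log n)$-time subroutine calls without singling out that multiplication.
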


\begin{proof}
\new{Let $\calC$ be the circuit family defined by \Cref{alg:node,alg:edge} given $\calG$ and the following block mapping:}
$
    \bnode(n, i) = \floor{i / \bsize(n)},
    \bstart(n, i') = i' \cdot \bsize(n),
    \bsize(n, i') = \bsize(n).
$
\new{Since $\bsize(n)$ is a power of $2$, $\bnode$ and $\bstart$ are reducible to left and right shifting over $\O(\log n)$-bit integers,}
which can be implemented in $\O(\log n)$ time.
Thus, \new{each block mapping function is computable in time $\O(\log n)$}. \precision{Similarly, as $p(n) = 2^m$ for some $m$ (cf.~\Cref{sec:transformers-def}), $k p(n)$ in line~\ref{alg-line:p_n_multiplication} of $\edge_\calG$ can be implemented in $\O(\log n)$ time by left-shifting $k$ by $m$.
Finally, since $\node_\calG$ and $\edge_\calG$ are simply} calling \new{functions computable in time $\O(\log n)$} with a constant overhead,
\new{we conclude that $\calC$, the circuit family they define, is \logtimeunif. From \Cref{thm:nonuniform}, $\calC$ is already known to simulate $\calG$ with constant depth and polynomial size, completing the proof.}
\end{proof}

\section{Conclusion}

We proved that any log-precision transformer classifier can be translated to an $\FOM$ sentence that computes the same function (on all inputs of any length).
This result comes by first simulating a transformer with a highly uniform threshold circuit family, and then leveraging the established equivalence of \logtimeunif circuits and $\FOM$.
Transformers and other neural nets are often discussed in contrast with symbolic models based on logical formalisms \citep{GARNELO201917}---an immediate implication of our result is that it is possible to express the inner workings of transformers also in a simple logic, challenging the premise of a rigid division between symbolic and neural models.
Our results also provide the tightest known upper bound on log-precision transformers.


While it is striking that a full transformer can be translated to a sentence in a logic as simple as $\FOM$, we believe the bound is not tight.
In particular, we conjecture that it is possible to simulate any transformer with an $\FOM$ sentence of quantifier depth of at most 2, which could be proven by establishing a hierarchy theorem describing the $\FOM$ quantifier depth needed to simulate a $\TC^0$ family of a certain size.
It would also be an interesting extension to translate real transformers to $\FOM$ sentences. In this sense, we believe our results provide a theoretical foundation to guide mechanistic interpretability work \citep[cf.][]{weiss2021thinking, tracr}.

Our findings provide a novel view into transformer classifiers and their limits. It would be exciting for future research to extend our results to account for other common practical uses of transformers, such as for long-form generation, chain-of-thought reasoning, and in-context learning.


\subsubsection*{Acknowledgments}
We thank Paul Beame, David Chiang, anonymous reviewers, and researchers at the Allen Institute for AI for feedback.
Thanks to Noa Nabeshima and Kai Yee for identifying minor issues that have been corrected.
WM was supported by an NSF graduate research fellowship and in part by NSF award 1922658.


\bibliography{references}
\bibliographystyle{icml2023}

\newpage
\appendix
\section{Conditional Majority} \label{sec:cond-majority}

Given formulas $\phi, \psi$, $\mathsf M i : \phi.\ \psi$ is a sentence that is true iff $\psi$ is true for at least half the values of $i$ that make $\phi$ true.

\begin{proposition}
For any two predicates $\phi(i)$ and $\psi(i)$, $\mathsf M i : \phi(i) .\ \psi(i)$ can be expressed in $\FOM$.
\end{proposition}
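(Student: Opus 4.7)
The plan is to reduce the conditional majority $\mathsf M i : \phi(i).\ \psi(i)$ to the counting quantifiers already shown in the Extensions paragraph of \Cref{sec:logic} to be expressible in $\FOM$. Semantically, $\mathsf M i : \phi.\ \psi$ holds iff $2\beta \geq \alpha$, where $\alpha = \abs{\{i : \phi(i)\}}$ and $\beta = \abs{\{i : \phi(i) \wedge \psi(i)\}}$, under the natural convention that the statement is vacuously true when $\alpha = 0$.

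Concretely, I would introduce existentially bound index variables $a, b$ to name the two counts via the counting-quantifier formulas $\exists^{a} i : \phi(i)$ and $\exists^{b} i : \phi(i) \wedge \psi(i)$. Since $\FOM$ allows index arithmetic (addition and subtraction) and the comparison $\geq$ (\Cref{def:index,def:formula}), I can then write $b + b \geq a$ to encode $2\beta \geq \alpha$. This yields the candidate formula
\begin{equation*}
\mathsf M i : \phi(i).\ \psi(i) \;\equiv\; (\neg \exists i.\ \phi(i)) \;\vee\; \exists a, b.\ \bigl( (\exists^{a} i : \phi(i)) \wedge (\exists^{b} i : \phi(i) \wedge \psi(i)) \wedge (b + b \geq a) \bigr).
\end{equation*}

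The main subtlety, and what I expect to be the main obstacle, is the treatment of boundary cases arising because index variables range only over $[1,n]$ (\Cref{def:index}) and so cannot name the count $0$. I would verify three cases explicitly: (i) if $\alpha = 0$, the first disjunct is true, matching the vacuous-truth convention; (ii) if $\alpha \geq 1$ but $\beta = 0$, no $b \in [1,n]$ satisfies $\exists^{b} i : \phi(i) \wedge \psi(i)$, so the second disjunct is false and the whole formula is false, correctly matching $2\beta = 0 < \alpha$; (iii) if both $\alpha, \beta \geq 1$, valid $a, b \in [1,n]$ exist and the inner conjunct reduces exactly to $2\beta \geq \alpha$. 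Once these cases are checked, the proposition follows because counting quantifiers, index arithmetic, and the $\geq$ predicate are all available in $\FOM$.
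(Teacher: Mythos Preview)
Your proposal is correct and follows essentially the same approach as the paper: both reduce the conditional majority to counting/threshold quantifiers (which the Extensions paragraph already notes are $\FOM$-expressible) together with an arithmetic comparison of the two counts. The only cosmetic differences are that the paper uses a threshold quantifier $\exists^{\geq k'}$ for the inner count and encodes the comparison via $2k' = k$ using $\bit$, whereas you use exact counting for both and compare via $b+b \geq a$; your treatment of the $\alpha=0$ and $\beta=0$ boundary cases is in fact more explicit than the paper's.
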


\begin{proof}
$\mathsf M i : \phi .\ \psi $ can be rewritten using a counting quantifier and a threshold quantifier:
\begin{equation*}
    \exists k, k' .\ \left[ 2k' = k \wedge  \exists^k i : \phi(i) \wedge \exists^{\geq k'}j : \left(\phi(j) \land \psi(j) \right)\right] .
\end{equation*}
The formula $2k' = k$ can be defined using $\bit$.
We then use the fact that counting and threshold quantifiers can be expressed in terms of majority quantifiers \citep{Barrington1988OnUW} to conclude that $\mathsf M i : \phi .\ \psi$ can be expressed in $\FOM$. 
\end{proof}

\section{Omitted Proofs} \label{sec:omitted-proofs}

\Cref{table:notation} summarizes the notation we use in the following proofs when describing computation graphs and circuit families.

\begin{table}[ht]
\caption{\label{table:notation} Summary of common notation for computation graph and circuit families.}
\centering
\begin{tabular}{|cccl|}
    \hline
    Graph & Circuit & Output Range & Description \\
    \hline
    $i'$ & $i$ & $\mathbb Z$ & index of node or gate \\
    $\node_{\calG}(n, i')$ & $\node_{\calC}(n,i)$ & $\mathfrak F$\footnote{We abuse notation and consider the node type of a computation graph whose primitive functions are computable by circuit families to be those circuit families.} & type of node or gate \\
    $\edge_{\calG}(n,i', j')$ & $\edge_{\calC}(n,i, j)$ & $\mathbb Z$ & argument \# of edge $i \to j$ \\
    $\size_{\calG}(n)$ & $\size_{\calC}(n)$ & $\mathbb Z$ & \# of nodes or gates\\
    $\depth_{\calG}(n)$ & $\depth_{\calC}(n)$ & $\mathbb Z$ & longest path length\\
    \hline
    \multicolumn{4}{c}{} \\ \hline
    \multicolumn{2}{|c}{$\bnode(n, i)$} & $[0, \size_\calG(n)]$ & block containing $i$ \\
    \multicolumn{2}{|c}{$\bstart(n, i')$} & $[0, \size_\calC(n)]$ & first gate in block $i'$ \\
    \multicolumn{2}{|c}{$\bsize(n, i')$} & $\mathbb Z$ & size of block $i'$ \\
    \hline
\end{tabular}
\end{table}

\subsection{Transformers are Log-Uniform Computation Graph Families} \label{app:transformer-uniform}

\new{We now justify that the computation graph family defining a transformer is \logtimeunif.}
To do this, we introduce a stronger notion of uniformity called \emph{column uniformity} that captures the highly regular structure of the transformer.

\new{Let $\node(G, i)$ be the $i$-th node of computation graph $G$. Let $a \bmod b$ be the remainder when $a$ is divided by $b$.}
\begin{definition}[Column uniformity]
A computation graph family $\calG$ is $T(n)$-column-uniform iff \new{there exists a computation graph $\col$ (with fixed size w.r.t $n$) such that, for all $i,j$ such that $0 \leq i, j < \size_\calG(n)$:}
\begin{compactenum}
    \item \new{$\node_\calG(n, i) = \node \left( \col, i \bmod \size(\col) \right)$}.
    \item If $\floor{i / \size(\col)} = \floor{j / \size(\col)}$, then
    \begin{equation*}
        \edge_{\calG}(n, i, j) = \edge \left( \col, i \bmod \size(\col), j \bmod \size(\col) \right) .
    \end{equation*}
    Otherwise, $\edge_{\calG}(n, i, j)$ can be computed by a deterministic Turing machine in time $T(n)$.
\end{compactenum}
\end{definition}

We define \emph{\logtimecolunif} analogously to \logtimeunif: i.e., we let $T(n) = \O(\log n)$. \new{\logtimecolunif implies \logtimeunif because our implementations of $\node_\calG$ and $\edge_\calG$ can store $K$ in a finite lookup table and compute the quotient and remainder of $i$ and $j$ by $\size(K)$ in $\O(\log n)$ time using \Cref{lem:division}. The edges outside of $K$ are computable in $\O(\log n)$ time by construction.}

\lemTransformerUniform*

\begin{proof}
We show the stronger condition that any transformer $\calT$ is a \logtimecolunif computation graph family, which implies it is \logtimeunif.

We have the column $\col$ by \Cref{def:transformer-computation}:
all that remains to show is that $\edge_{\calG_\calT}$ can be computed in time $\O(\log n)$ \new{for edges outside the column}. These edges route from the layer $\ell$ output to the self-attention heads of layer $\ell + 1$.
\new{Following from the column structure, there exists $k_\ell$ such that a node $i$ is an output vector of layer $\ell$ iff $k_\ell = i \bmod \size(K)$.
In a finite lookup table, we can store $k_\ell$ for each $\ell + 1$, and use this for self-attention routing.
For an unmasked self-attention head $j$, we compute:
\begin{equation*}
    \edge_{\calG_\calT}(n, i, j) = \begin{cases}
        \floor{i / \size(K)} & \textrm{if} \; k_\ell = i \bmod \size(K) \\
        -1 & \textrm{otherwise.}
    \end{cases}
\end{equation*}}
For causally masked attention, \new{we extend the first case to check that $\floor{i/\size(K)} \leq \floor{j/\size(K)}$}. Either way, this logic \new{can be implemented in time $\O(\log n)$ via \Cref{lem:division}}. Thus, we conclude that \new{$\calG_T$} is column-uniform.
\end{proof}

\subsection{Transformer Components are Computable by Log-Uniform Threshold Circuits} \label{app:components-uniform}

\lemComponentsUniform*

We prove a more general version of \Cref{lem:components-uniform} that handles some cases with weights growing with $n$.
The weights $\theta_\calT$ are just a special case of a computation graph (that do not depend on the input); we can thus apply our definition of \logtimeunif to them.
\Cref{lem:components-uniform} follows from a more general result with \logtimeunif $\theta_\calT$:

\begin{lemma}
Let $\calT$ be a \logtimeunif transformer with \logtimeunif $\theta_\calT$.
Then each component in $\mathfrak F$ is computable in \logtimeunif $\TC^0$.
\end{lemma}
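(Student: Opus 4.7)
The plan is to show each of the four component types (embedding $\phi$, self-attention building blocks $s^\ell_k$ and $v^\ell_k$, feedforward/activation $f^\ell$, and classifier $\kappa$) admits a log-time-uniform $\TC^0$ circuit family. The core tool will be the classical fact that iterated integer addition, iterated multiplication, and integer division over $O(\log n)$-bit numbers lie in log-time-uniform $\TC^0$ (Hesse, Allender, Barrington). Because $p(n) = O(\log n)$, every individual $p$-precision float operation $+,\cdot,/$ on $\mathbb{D}_p$ reduces to a constant number of such integer operations on $O(\log n)$-bit strings, and hence is in log-time-uniform $\TC^0$. Moreover, these operations compose in constant depth.

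First I would reduce each component to a constant-length recipe over these primitive operations. The value function $v^\ell_k$, feedforward $f^\ell$, embedding $\phi$, and classifier $\kappa$ are each obtained from a fixed-size affine transformation composed with a simple nonlinearity (ReLU, sigmoid, layer norm). Affine maps on $m$-dimensional $p$-precision vectors reduce to $O(m^2)$ products plus iterated sums of $m$ terms; since $m$ is constant, this is a constant-size combination of $\TC^0$ primitives. Layer norm additionally requires a $p$-precision square root and a division, both available in $\TC^0$ for $O(\log n)$-bit inputs. The similarity function $s^\ell_k$ is handled the same way, with the softmax normalization broken into an $\exp$ (computable at $p$-bit precision by a constant-size polynomial/table-based $\TC^0$ circuit) followed by the iterated sum $Z_{i,k}$ and a division; the key scalable step is iterated addition of $n$ values of $O(\log n)$ bits, which is exactly the classical $\TC^0$ iterated-sum circuit.

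Second, I would handle the parameters. Since $\theta_\calT$ is itself a log-time-uniform object, any individual parameter bit is retrievable by a Turing machine in $O(\log n)$ time. Thus, in writing the $\node$ and $\edge$ algorithms for each component's circuit, we can hard-wire constant-valued gates holding the required parameter bits and determine their positions and values in $O(\log n)$ time. Combined with the fact that the primitive arithmetic subcircuits (iterated add/multiply, division, comparison) all have log-time-uniform wiring functions, each component's $\node_\calF$ and $\edge_\calF$ is computable in $O(\log n)$ time by plugging index arithmetic (shifts, remainders, and range checks on $i,j$) into the uniform descriptions of the primitive subcircuits.

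The main obstacle will be getting the uniformity bookkeeping right for the self-attention head, where the circuit size must scale with $n$ and where multiple index translations (which position $j$ is being attended to, which coordinate within $\mathbf{h}^\ell_j$, which bit within that coordinate) must be decoded in log time. Making this step clean will require the same kind of block-layout argument used in \Cref{thm:nonuniform,thm:uniform}: pack the $n$ inputs to iterated addition into contiguous, power-of-two aligned blocks so that the wiring between position-$j$ representations and the addition tree is a simple arithmetic function of $(i,j)$, computable in $O(\log n)$ time by \Cref{lem:division}. Once this routing is verified log-time-uniform, composing it with the fixed-size arithmetic recipes above yields log-time-uniform $\TC^0$ circuits for every $\calF \in \mathfrak F$, establishing the lemma.
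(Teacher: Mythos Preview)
Your proposal is correct and follows essentially the same route as the paper: decompose into the four component types, reduce each to a composition of affine maps, layer norm, elementwise nonlinearities, and (for self-attention) an $n$-way iterated sum and division, then appeal to the classical \logtimeunif $\TC^0$ results for iterated addition, multiplication, and division. The paper packages the affine-transformation and layer-norm steps as separate reusable lemmas (\Cref{lem:affine,lem:layer-norm}) and invokes \Cref{thm:uniform} to discharge the block-layout routing you sketch by hand, but the substance is the same. One minor point: the stated lemma allows \logtimeunif $\theta_\calT$, so $m$ and $w$ may grow with $n$; your ``since $m$ is constant'' shortcut thus slightly undershoots the stated generality, though the fix is immediate since iterated addition of polynomially many $\O(\log n)$-bit terms is still in \logtimeunif $\TC^0$.
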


\begin{proof}
In \Cref{sec:modules}, we show that \logtimeunif $\theta_\calT$ implies:
\begin{compactenum}
    \item The embedding component is computable in \logtimeunif $\TC^0$ (\Cref{lem:embedding}).
    \item The self attention mechanism is computable in \logtimeunif $\TC^0$ (\Cref{lem:self-attention}).
    \item The activation block is computable in \logtimeunif $\TC^0$ (\Cref{lem:feedforward}).
    \item The output classifier head is computable in \logtimeunif $\TC^0$ (\Cref{lem:output}).
\end{compactenum}
We have shown that each $\calF \in \mathfrak F$ is computable in \logtimeunif $\TC^0$.
\end{proof}

\subsection{Transformer Component Size Has a Log-Time Upper Bound} \label{app:components-size}

\lemComponentsSize*

\begin{proof}

Let $2^{b(n)}$ be the least power of $2$ at least as large as $\size_\calF(n)$ for all $\calF$.
We observe that $2^{b(n)}$ is at most $2 \cdot \max_\calF \size_\calF(n)$ for all $n$. Because each $\calF$ has poly size, there is a fixed $k$ such that, for large enough $n$,\footnote{We can compute $\bsize(n)$ for small $n$ using finite lookup.}
\begin{align*}
    2^{b(n)} &\leq n^k \\
    \Rightarrow b(n) &\leq k \ceil{\log n} .
\end{align*}
Define $b'(n) = k \ceil{\log n}$ and $\bsize(n) = 2^{b'(n)}$. $\bsize(n)$ is both a power of $2$ and an upper bound on $2^{b(n)}$; what remains to be shown is that it can be computed in time $\O(\log n)$.
We can first compute $\ceil{\log n}$ in time $\O(\log n)$ by finding the greatest nonzero index of $n$.
Next, we can compute $b'(n) = k \cdot \ceil{\log n}$ in time $\O(\log \log n)$ since $k$ is fixed size and $\ceil{\log n}$ has size at most $\O(\log \log n)$ \citep{brent2010modern}.
Finally, we compute $\bsize(n) = 2^{b'(n)}$ by simply left-shifting $1$ at most $\O(\log n)$ times.
\end{proof}

\subsection{Circuit Families Can Be Padded to Log-Time Size Upper Bounds} \label{app:components-pad}

Recall that the last $p$ bits of our circuits represent the circuit's output (cf.~\cref{sec:computation-graphs}).
In \Cref{lem:components-pad}, we consider $\calF(x) = \calF'(x)$ if and only if the last $p$ bits of $\calF$ and $\calF'$ agree for all $x$.

\lemComponentsPad*

\begin{proof}
    The high level idea is that we can pad $\calF$ to a circuit $\calF'$ that has size $\bsize(n)$ and simply copies over the $p$ output bits of $\calF$ to its own last $p$ bits using identity gates.

    We first set $\node_{\calF'}$ to copy over the existing circuit and append identity nodes.
    Let $\textrm{Id}$ denote an identity node. Then $\node_{\calF'}$ is defined as:
    \begin{equation*}
        \node_{\calF'}(n, i) = \begin{cases}
            \node_{\calF}(n, i) & \textrm{if} \; \node_{\calF}(n, i) \neq \emptyset \\
            \textrm{Id} & \textrm{if} \; \node_\calF(n, i) = \emptyset \wedge i < \bsize(n) \\
            \emptyset & \textrm{otherwise} .
        \end{cases}
    \end{equation*}
    We see that the size of $\mathcal F'$ will thus be of size $\bsize(n)$.
    
    Next, we extend $\edge_{\calF'}(n, i, j)$ to route the original output bits to the new output bits.
    Recall that an edge value of $0$ means $i$ is the first argument of gate $j$, and an edge value of $-1$ means there is no edge $i \to j$.
    Let $k_j = p(n) - (\bsize(n) - j)$ be the index of node $j$ as an output gate in $\calF'$. For example, $k=0$ for the first output bit.
    Now let $\mathsf{output}_\calF(n, i, k)$ represent whether node $i$ is the $k$-th output of $F_n$. We can compute $\mathsf{output}_\calF(n, i, k)$ in terms of $\node_\calF$ as follows:
    \begin{equation*}
        \mathsf{output}_\calF(n, i, k) \iff \node_{\calF}(n, i + p(n) - k - 1) \neq \emptyset \wedge \node_\calF(n, i + p(n) - k) = \emptyset .
    \end{equation*}
    Then $\edge_{\calF'}$ is defined:
    \begin{equation*}
        \edge_{\calF'}(n, i, j) = \begin{cases}
            \edge_{\calF}(n, i, j) & \textrm{if} \; \edge_{\calF}(n, i, j) \neq -1 \\
            0 & \textrm{if} \; \mathsf{output}_\calF(n, i, k_j) \\ 
            -1 & \textrm{otherwise} .
        \end{cases}
    \end{equation*}
    The first condition simply copies over the original edges. The second condition adds $p(n)$ new edges (for the different values of $k$) that route the final $p(n)$ nodes of $\calF$ to the final $p(n)$ nodes of $\calF'$, guaranteeing that the two circuits will compute the same function.

    Because both $\node_{\calF'}$ and $\edge_{\calF'}$ just rely on addition, conditional branching, and a finite number of calls to functions computable in time $\O(\log n)$, they are both computable in time $\O(\log n)$.
\end{proof}

\section{Transformer Column Components}
\label{sec:modules}

In this section, we generally omit layer subscripts for clarity.
We assume a pre-norm \citep{xiong2020on} parameterization of the transformer for concreteness and because this is more standard in newer transformers.
However, the results would also hold with the original post-norm \citep{vaswani2017attention}.

As mentioned in the main text, we view $\theta_\calT$ as a concatenation of the parameters for the transformer functions. Thus, if $m$ and $w$ are computable in time $\O(\log n)$ and $\theta_\calT$ is \logtimeunif, it follows that the parameter vector for each $\phi, s, v, f$, and $\kappa$ is itself \logtimeunif because we can map indices in the smaller parameter vectors to indices in $\theta_\calT$ in time $\O(\log n)$.

\subsection{Transformer Embeddings} \label{sec:embedding}

For each position $1 \leq i \leq n$,
the transformer embedding function represents token $\sigma_i \in \Sigma$ and its position $i$ with a vector.
Let $\mathbf V$ be an embedding matrix of size $\abs{\Sigma} \times m$ where each row represents the embedding for some $\sigma$.
Let $f : \mathbb N \to \mathbb D_p^m$ be computable in time $\O(\log n)$.
Then,
\begin{equation*}
    \mathbf \phi(\sigma_i, i) = \mathbf v_{\sigma_i} + f(i) .
\end{equation*}

\begin{lemma} \label{lem:embedding}
If $\theta_\calT$ is \logtimeunif,
then $\phi$ is computable in \logtimeunif $\TC^0$.
\end{lemma}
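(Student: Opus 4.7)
The plan is to decompose the embedding $\phi(\sigma_i, i) = \mathbf{v}_{\sigma_i} + f(i)$ into three subcircuits and show each is computable in \logtimeunif $\TC^0$: (i) a lookup circuit that selects $\mathbf{v}_{\sigma_i}$ from the embedding matrix $\mathbf V$ stored in $\theta_\calT$, (ii) a circuit for the positional map $f(i)$, and (iii) an $m$-way componentwise $p$-precision float adder. Given these pieces, I would then wire them together and verify that the resulting composite circuit still satisfies the \logtimeunif predicate.

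For the lookup, the input token $\sigma_i$ is presented (via the one-hot encoding convention from the $\Q_{\texttt 1}$ definition) as a constant-size indicator over $\Sigma$. A standard multiplexer works: for each of the $mp$ output bits, take the $\vee$ over $\sigma \in \Sigma$ of the indicator for $\sigma$ AND-ed with the corresponding bit of $\mathbf{v}_\sigma$. This is a constant-depth $\AC^0$ subcircuit, and since $\theta_\calT$ is \logtimeunif, each bit of $\mathbf{v}_\sigma$ can be located and read in time $\O(\log n)$, so the wiring that routes those constants into the multiplexer is also \logtimeunif. For the positional term, $f$ is computable in time $\O(\log n)$ on a deterministic Turing machine, and its output has only $mp = \O(\log n)$ bits. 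Using the standard inclusion of $\mathsf{DLOGTIME}$ in \logtimeunif $\AC^0 \subseteq$ \logtimeunif $\TC^0$, $f$ is implementable by a \logtimeunif $\TC^0$ family whose uniformity predicate boils down to reading bits of $i$ and simulating $\O(\log n)$ steps of the machine computing $f$. For the final addition, componentwise $p$-precision float addition is \logtimeunif $\TC^0$: exponent alignment is a shift by an $\O(\log p)$-bit value (in $\TC^0$), followed by integer addition (in $\AC^0$), and we need only $m$ independent copies with fixed-width wiring.

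To combine these into a single circuit, I would allocate three contiguous blocks of gates (lookup, positional, adder) with block sizes that are fixed powers of two at least as large as the individual subcircuit sizes (invoking the padding trick of \Cref{lem:components-pad}). Gate indices then factor as (block id, intra-block index) via division by a power of two, which is log-time computable by \Cref{lem:division}-style bit shifting. The adder's inputs are wired to the corresponding outputs of the two preceding blocks using the same shift-and-offset arithmetic.

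The main obstacle is not any of the individual constructions, which are standard, but verifying end-to-end \emph{\logtimeunif} (not merely log-space uniform) of the composite: both $\node$ and $\edge$ must be decidable in time $\O(\log n)$ given only the binary indices of the gates. This reduces to two things: first, that the block-routing arithmetic (identifying which subcircuit a gate belongs to and computing its offset inside that subcircuit) is implementable with $\O(\log n)$-bit integer shifts and comparisons; and second, that the per-subcircuit uniformity machines run in $\O(\log n)$ time, which in turn requires invoking the \logtimeunif access to $\theta_\calT$ for any constants that appear in the lookup block. Once this bookkeeping is carried out, each subcircuit's \logtimeunif $\TC^0$ implementation lifts to a \logtimeunif $\TC^0$ implementation of $\phi$, completing the proof.
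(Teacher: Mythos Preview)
Your proposal is correct and follows the same three-part decomposition as the paper (token lookup $\mathbf v_\sigma$, positional term $f(i)$, componentwise sum), but works each piece out explicitly where the paper invokes general machinery. The paper casts the lookup as the affine map $\mathbf V^\top$ applied to the one-hot encoding of $\sigma$ and cites \Cref{lem:affine}; it handles composition by viewing $\phi$ as a constant-size computation graph over \logtimeunif $\TC^0$ node types and appealing to \Cref{thm:uniform}, rather than redoing the block-allocation and padding bookkeeping you sketch; and for $f(i)$ it simply notes that the uniformity Turing machine can compute $f(i)$ in $\O(\log n)$ time and hardwire the result as constants, which is what your $\mathsf{DLOGTIME}$ appeal amounts to once unpacked. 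Your route is more elementary and self-contained; the paper's is shorter because it reuses lemmas already proved.
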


\begin{proof}
The embedding block can be expressed as a constant-size computation graph that constructs $\mathbf V$, computes $\mathbf v_{\sigma_i}$ using an affine transformation, computes $f(i)$, and then, finally, sums $\mathbf v_{\sigma_i}$ and $f(i)$. The first step is computable by a \logtimeunif constant-depth, poly-size threshold circuit family since $\theta_\calT$ is \logtimeunif. We can compute an affine transformation via a \logtimeunif constant-depth poly-size threshold circuit family via \Cref{lem:affine}. $f(i)$ can be directly computed by the Turing machine constructing the circuit by construction. The sum of the two terms can then be computed by a \logtimeunif constant-depth threshold circuit of size polynomial in $m$, which is also polynomial in $n$. Since we have a computation graph where all node types are computable by \logtimeunif, constant-depth, poly-size threshold circuit families, we conclude by \Cref{thm:uniform} that $\phi$ can also be computed by \logtimeunif, constant-depth, poly-size threshold circuit family.
\end{proof}

\subsection{Self Attention} \label{sec:self-attention}

The two components of the self attention block are $s$, the similarity function, and $v$, the value function.
Let $\mathbf h_i$ be the hidden state at the previous layer and $\bar {\mathbf h}_i = \lnorm(\mathbf h_i)$.
Then, the similarity function first computes queries and keys, and then takes the scaled dot-product between them:
\begin{align*}
    \mathbf q_i &= \mathbf W_q \bar {\mathbf h}_i + \mathbf b_q \\
    \mathbf k_i &= \mathbf W_k \bar {\mathbf h}_i + \mathbf b_k \\
    s(\mathbf h_i, \mathbf h_j) &= \exp \left( \frac{\mathbf q_i^\top \mathbf k_i}{\sqrt{m/h}} \right) .
\end{align*}
Then the value function is defined $v(\mathbf h_i) = \mathbf W_h \bar {\mathbf h}_i + \mathbf b_h$.
We first show that the value function (and also the keys and queries by symmetry) is computable in \logtimeunif $\TC^0$:

\begin{lemma} \label{lem:self-attention}
If $\theta_\calT$ is \logtimeunif, then the self-attention component is computable in \logtimeunif $\TC^0$.
\end{lemma}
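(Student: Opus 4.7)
The plan is to express the self-attention computation of a single head at a single position pair as a constant-depth computation graph whose every primitive is already known to be computable by a \logtimeunif $\TC^0$ circuit family, and then invoke \Cref{thm:uniform} (in the specialization given by \Cref{cor:comp-graphs-as-XC-circuits}) to conclude that the whole block is itself in \logtimeunif $\TC^0$.

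First I would handle $v(\mathbf h_i) = \mathbf W_h \bar{\mathbf h}_i + \mathbf b_h$. This is the composition of layer normalization with an affine transformation. The affine transformation is directly handled by \Cref{lem:affine}, whose hypothesis (that the parameter block for $v$ is \logtimeunif) follows from our assumption on $\theta_\calT$ as noted at the beginning of \Cref{sec:modules}. For the layer norm $\bar{\mathbf h}_i = \lnorm(\mathbf h_i)$, I would observe that it reduces to (i) an iterated sum over $m$ log-precision floats to compute the mean, (ii) an iterated sum of squared deviations for the variance, (iii) a square root and reciprocal, and (iv) a coordinatewise affine rescaling. Iterated addition, iterated multiplication, and division of $\O(\log n)$-bit numbers are all in \logtimeunif $\TC^0$ (indeed, they are $\FOM$-complete by \citealp{hesse2001division}), and a square root on a $\O(\log n)$-bit input reduces to division by standard techniques, so $\lnorm$ is in \logtimeunif $\TC^0$.

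Next I would handle $s(\mathbf h_i, \mathbf h_j)$. By exactly the same argument as for $v$, the queries $\mathbf q_i$ and keys $\mathbf k_j$ are computable in \logtimeunif $\TC^0$. Their scaled dot product $\mathbf q_i^\top \mathbf k_j / \sqrt{m/h}$ is one iterated multiply-add of $m$ log-precision floats followed by one division by a constant (depending only on $m,h$, both computable in $\O(\log n)$), so it is also in \logtimeunif $\TC^0$. The outer $\exp$ applied to a $p$-bit float is then computable in \logtimeunif $\TC^0$: one can approximate $\exp$ to $\O(\log n)$ bits of precision by a finite truncation of its Taylor series, which is a constant-depth computation graph built from the iterated arithmetic primitives already shown to be in \logtimeunif $\TC^0$. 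Uniformity here is immediate because the truncation index is a fixed function of $p(n)$, hence computable in $\O(\log n)$ time. Putting the pieces together, both $s$ and $v$ are computation graphs of constant size whose every node type lies in \logtimeunif $\TC^0$, and $\node$/$\edge$ functions for these fixed-size graphs are computable in $\O(\log n)$ time by finite lookup; \Cref{thm:uniform} then yields the desired \logtimeunif $\TC^0$ circuit family for the self-attention component.

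The main obstacle I expect is the treatment of the transcendental primitive $\exp$ on log-precision floats; all the linear-algebraic pieces reduce cleanly to iterated arithmetic, but care is needed to argue that a constant-degree Taylor approximation suffices to match the \emph{$p$-truncated} semantics of $\exp$ used in \Cref{def:trans}, and that the resulting error stays within the truncation tolerance uniformly in $n$. Everything else is essentially bookkeeping: verifying that the parameter slice feeding each sub-component inherits \logtimeunif from $\theta_\calT$ (which follows because indices can be remapped in $\O(\log n)$ time), and verifying that the aggregation over $j = 1, \ldots, n$ implicit in the attention sum is \emph{not} part of this lemma — it belongs to the outer computation graph routed by \Cref{thm:uniform}, and is handled there by the iterated-addition gadget in \logtimeunif $\TC^0$.
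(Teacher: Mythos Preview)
Your proposal handles $s$ and $v$ along essentially the same lines as the paper, but it misreads the scope of the lemma. In the paper's setup (\Cref{lem:transformers-uniform}), the transformer is a computation graph whose node types $\mathfrak F$ are embedding, \emph{self-attention}, activation, and output; the self-attention node at position $i$ receives \emph{all} previous-layer vectors $\mathbf h^\ell_1,\ldots,\mathbf h^\ell_n$ as inputs and must itself produce the full head output $\mathbf a_{i,k}$. Accordingly, the paper's proof has a third paragraph you omit: after showing $s$ and $v$ are in \logtimeunif $\TC^0$, it argues that the remaining attention computation reduces to binary multiplication, binary division (by the normalizer $Z_{i,k}$), and iterated addition over $n$ floats in $\mathbb D_p$, each of which is in \logtimeunif $\TC^0$. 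Your closing claim that ``the aggregation over $j=1,\ldots,n$ \ldots is not part of this lemma'' is therefore backwards; nothing in the outer construction of \Cref{thm:uniform} performs that sum for you, so as written your proposal does not establish the stated lemma.

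On $\exp$: the paper avoids any series argument. In the float semantics $m\cdot 2^{e-|m|+1}$, it observes that $p$-truncated $\exp$ can be realized as ``multiplication followed by left-shifting,'' i.e., by directly manipulating the exponent field. Your Taylor-series route is not obviously unworkable if you take polynomially many terms and package them as a single iterated-arithmetic node, but you oscillate between ``constant-degree'' and ``truncation index a fixed function of $p(n)$,'' and a genuinely constant-degree truncation will not match $p$-truncated $\exp$ once the argument is large. The paper's approach sidesteps this difficulty entirely. (Minor note: the paper also packages your inline layer-norm argument as a separate \Cref{lem:layer-norm} and simply cites it.)
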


\begin{proof}
$v$ is a composition of constructing the parameters (in \logtimeunif $\TC^0$ since $\theta_\calT$ is \logtimeunif), layer norm (in \logtimeunif $\TC^0$ by \Cref{lem:layer-norm}), and an affine transformation (in \logtimeunif $\TC^0$ by \Cref{lem:affine}). Thus, $v$ is computable in \logtimeunif $\TC^0$.

Computing $s$ is a constant-depth computation graph. First, we compute $\mathbf q_i$ and $\mathbf k_i$ and then multiply them, and all of these steps are in \logtimeunif $\TC^0$. Next, we can compute $m$ and $h$ in time $\O(\log n)$ and build a \logtimeunif $\TC^0$ circuit that divides the product of the last step by $\sqrt{m/h}$.
Finally, we compute $p$-precision $\exp$, which can be expressed in \logtimeunif $\TC^0$ as multiplication followed by left-shifting.
Thus, by \Cref{thm:uniform}, $s$ can be computed in \logtimeunif $\TC^0$.

$s$ and $v$ are \logtimeunif, so their size $p$ is at most $\poly(n)$. Computing self attention reduces to binary multiplication and division over $\mathbb D_p$, and performing iterated addition (summation) over $n$ numbers in $\mathbb D_p$. Binary multiplication, binary division \citep{hesse2001division}, and iterated addition \citep{merrill2023parallelism} can all be computed in \logtimeunif $\TC^0$, i.e., by a \logtimeunif, constant-depth threshold circuit family of size at most $\poly(p) \leq \poly(n)$. Thus, self attention can also be computed in \logtimeunif $\TC^0$.
\end{proof}




\subsection{Activation Block} \label{sec:feedforward}

The activation function $f$ encapsulates the aggregation of the attention head outputs and the feedforward subnetwork of the transformer. $f$ takes as input attention head outputs $\mathbf a_{i,1}, \ldots, \mathbf a_{i,h} \in \mathbb D_p^{m/h}$ and the previous layer value $\mathbf h_i$.

The first part of the activation block simulates the pooling part of the self-attention sublayer.
The head outputs are first concatenated to form a vector $\mathbf a_i$, which is then passed through an affine transformation $(\mathbf W_o, \mathbf b_o) : \mathbb D_p^m \to \mathbb D_p^m$ followed by residual connections to form the sublayer output $\mathbf o_i \in \mathbb D_p^m$:
\begin{equation*}
    \mathbf o_i = \mathbf W_o \mathbf a_i + \mathbf b_o + \mathbf h_i .
\end{equation*}

The second part of the activation block first applies layer-norm and then simulates the feedforward subnetwork to compute the next layer vector $\mathbf h'_i$.
Let $\bar{\mathbf o}_i = \lnorm(\mathbf o_i)$.
Let $\sigma$ be a nonlinearity computable in linear time on its input (in the most standard transformer, ReLU). Then, for affine transformations $(\mathbf W_1, \mathbf b_1) : \mathbb D_p^m \to \mathbb D_p^w$ and $(\mathbf W_2, \mathbf b_2) : \mathbb D_p^w \to \mathbb D_p^m$, the feedforward subnetwork can be defined:
\begin{align*}
    \mathbf h'_i = \mathbf W_2 \sigma(\mathbf W_1 \bar{\mathbf o}_i + \mathbf b_1) + \mathbf b_2 + \mathbf o_i .
\end{align*}

\begin{lemma} \label{lem:feedforward}
If $\theta_\calT$ is \logtimeunif, then $f$ is computable in \logtimeunif $\TC^0$.
\end{lemma}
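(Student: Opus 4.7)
The plan is to decompose $f$ into a constant-size computation graph whose node types are each computable by \logtimeunif $\TC^0$ circuit families, and then invoke \Cref{thm:uniform} (equivalently, \Cref{cor:comp-graphs-as-XC-circuits} together with \logtimeunif ness) to conclude that the whole block is in \logtimeunif $\TC^0$. Concretely, I would write $f$ as the composition of: (i) a concatenation of head outputs into $\mathbf a_i$, which is pure wiring; (ii) the affine map $\mathbf W_o \mathbf a_i + \mathbf b_o$; (iii) a residual addition with $\mathbf h_i$ to produce $\mathbf o_i$; (iv) $\lnorm$ applied to $\mathbf o_i$; (v) the affine map $\mathbf W_1 \bar{\mathbf o}_i + \mathbf b_1$; (vi) a coordinate-wise nonlinearity $\sigma$; (vii) the affine map $\mathbf W_2(\cdot) + \mathbf b_2$; and (viii) a residual addition with $\mathbf o_i$.

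For each of these, I would verify the \logtimeunif $\TC^0$ property. Building the parameter blocks $\mathbf W_o, \mathbf b_o, \mathbf W_1, \mathbf b_1, \mathbf W_2, \mathbf b_2$ is in \logtimeunif $\TC^0$ because $\theta_\calT$ is \logtimeunif by hypothesis (together with the fact, already used in the proofs of \Cref{lem:embedding,lem:self-attention}, that the parameter sub-vectors for individual core functions inherit \logtimeunif ness from $\theta_\calT$ since $m$ and $w$ are computable in time $\O(\log n)$). The three affine transformations are in \logtimeunif $\TC^0$ by \Cref{lem:affine}, and layer normalization is in \logtimeunif $\TC^0$ by \Cref{lem:layer-norm}. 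The residual additions and the concatenation reduce to $p$-precision addition and simple routing, which are known to be in \logtimeunif $\TC^0$ \citep{merrill2023parallelism}. For the nonlinearity $\sigma$, since it is applied coordinate-wise and each scalar application is a fixed, linear-time function on its $p$-bit input (e.g., ReLU is just a sign test together with a conditional copy), it is clearly computable by a \logtimeunif $\TC^0$ circuit of size $\poly(p) \le \poly(n)$.

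Having established that each node type is in \logtimeunif $\TC^0$ and that the overall graph has constant size (independent of $n$), I would apply \Cref{thm:uniform} to conclude that the composed computation $f$ is itself in \logtimeunif $\TC^0$. This mirrors the overall structure already used for \Cref{lem:embedding,lem:self-attention}, so the proof is essentially an assembly argument rather than the introduction of new primitives.

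The main obstacle I anticipate is a bookkeeping one rather than a mathematical one: checking that the node-type and edge functions of the decomposed graph are not merely well-defined but in fact computable in $\O(\log n)$ time, given that $m$ and $w$ (and hence the dimensions of the intermediate vectors) grow with $n$. This amounts to verifying that indexing into the concatenation of head outputs and into the intermediate vectors of the feedforward sublayer can be done by simple arithmetic on $\O(\log n)$-bit integers, which follows from the assumption that $m$ and $w$ are computable in time $\O(\log n)$ (the same assumption already implicit in \Cref{lem:embedding,lem:self-attention}). Once this uniformity bookkeeping is done, the rest of the argument is a direct application of \Cref{thm:uniform}.
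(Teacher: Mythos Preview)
Your proposal is correct and follows essentially the same approach as the paper: express $f$ as a constant-size computation graph whose nodes are parameter construction, affine transformations, layer-norm, and elementwise nonlinearities, verify each is in \logtimeunif $\TC^0$ (via \Cref{lem:affine}, \Cref{lem:layer-norm}, and the \logtimeunif ness of $\theta_\calT$), and conclude via \Cref{thm:uniform}. The paper's proof is more terse and omits the explicit enumeration of concatenation and residual-addition nodes as well as your uniformity bookkeeping discussion, but the structure is identical.
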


\begin{proof}
The activation block can be expressed as a constant-size computation graph where the nodes construct affine transformation parameters, apply affine transformations, compute layer-norm, and compute elementwise nonlinearities. Since each of these nodes is computable by a \logtimeunif, constant-depth, poly-size threshold circuit family, the activation block is as well.
\end{proof}

\subsection{Output Classifier Head} \label{sec:output}

We assume the output from the transformer is computed as follows. First, $\bar {\mathbf h}_1 = \lnorm(\mathbf h_1)$. Then, we use a parameter vector $\mathbf w \in \mathbb D_p^m$ and bias term $b$ to compute:
\begin{equation*}
    \kappa(\mathbf h_1) = \sgn(\mathbf w^\top \bar {\mathbf h}_1 + b) .
\end{equation*}

\begin{lemma} \label{lem:output}
If $\theta_\calT$ is \logtimeunif, then $\kappa$ is computable in \logtimeunif $\TC^0$.
\end{lemma}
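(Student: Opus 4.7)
The plan is to express $\kappa$ as a small, constant-depth computation graph whose node types are each already known to be computable by \logtimeunif $\TC^0$ families, and then invoke \Cref{thm:uniform} (via \Cref{cor:comp-graphs-as-XC-circuits}) to conclude that the whole composition is in \logtimeunif $\TC^0$. Concretely, $\kappa(\mathbf h_1)$ decomposes into four stages: (i) constructing the parameter vector $\mathbf w \in \mathbb D_p^m$ and scalar bias $b$; (ii) applying layer normalization to produce $\bar{\mathbf h}_1 = \lnorm(\mathbf h_1)$; (iii) computing the affine map $\mathbf w^\top \bar{\mathbf h}_1 + b$; and (iv) applying $\sgn$ to the resulting scalar.

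For step (i), since $\theta_\calT$ is \logtimeunif by assumption and the parameters of $\kappa$ form a sub-vector of $\theta_\calT$ (whose indices are computable from those of $\kappa$'s parameters in time $\O(\log n)$, as noted at the start of \Cref{sec:modules}), the parameters $\mathbf w$ and $b$ can be laid down by a \logtimeunif constant-depth, poly-size threshold circuit family. For step (ii), I will invoke the already-established fact that $\lnorm$ is computable in \logtimeunif $\TC^0$ (the lemma used in the proofs of \Cref{lem:self-attention,lem:feedforward} for layer norm). For step (iii), the affine transformation is in \logtimeunif $\TC^0$ by the same affine-transformation lemma used elsewhere in \Cref{sec:modules}. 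For step (iv), $\sgn$ on a $p$-precision float reduces to reading its sign bit (and checking nonzero magnitude when we want to distinguish $0$), which is trivially a constant-depth threshold (in fact $\AC^0$) operation of size $\poly(p) \leq \poly(n)$ and is \logtimeunif by inspection.

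Having exhibited a constant-size computation graph whose every node type is in \logtimeunif $\TC^0$, I will finish by appealing to \Cref{cor:comp-graphs-as-XC-circuits} (equivalently \Cref{thm:uniform}) with $\mathsf{XC}^0 = \TC^0$ to conclude that $\kappa$ itself is computable by a \logtimeunif constant-depth, poly-size threshold circuit family. There is no real obstacle here beyond assembling the pieces: the structure of the argument is identical to that of \Cref{lem:embedding,lem:self-attention,lem:feedforward}, and $\kappa$ is in fact simpler since its output is a single bit rather than an $m$-dimensional vector, so the only mild subtlety is verifying that $\sgn$ on $\mathbb D_p$ is \logtimeunif $\TC^0$-computable, which is immediate from the float representation.
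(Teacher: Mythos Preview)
Your proposal is correct and takes essentially the same approach as the paper: decompose $\kappa$ into a constant-depth composition of parameter construction, layer norm, an affine map, and $\sgn$, each of which is in \logtimeunif $\TC^0$, and then conclude via the composition machinery. The paper's own proof is in fact terser---it only names the parameter-construction and affine-transformation steps---so your version is a more explicit rendering of the same argument, with the layer-norm and $\sgn$ steps spelled out rather than elided.
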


\begin{proof}
We can express computing $\kappa$ as a composition of constructing the parameters $\mathbf w, b$ and computing the affine transformation. Both parts of this composition are computable by a \logtimeunif, constant-depth, poly-size threshold circuit family, so computing $\kappa$ is as well.
\end{proof}

\section{Neural Net Building Blocks}

In this section we analyze the uniformity of common neural net building blocks that are used within the various high-level transformer components.







\subsection{Affine Transformations}

Affine transformations are a core part of neural networks used in various parts of the transformer. An affine transformation takes as input parameters $(\mathbf W, \mathbf b) : \mathbb D_p^a \to \mathbb D_p^b$ and a vector $\mathbf x \in \mathbb D_p^a$ and returns $\mathbf W \mathbf x + \mathbf b$.

\begin{lemma} \label{lem:affine}
For $p = \O(\log n)$, any $p$-precision affine transformation where $\mathbf W, \mathbf b$ are \logtimeunif is computable by a \logtimeunif, constant-size threshold circuit family of size polynomial in $a$ and $b$.
\end{lemma}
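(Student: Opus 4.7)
The plan is to realize the affine transformation $\mathbf{W}\mathbf{x} + \mathbf{b}$ as a constant-depth computation graph whose primitive node types are each computable by \logtimeunif $\TC^0$ circuit families, and then invoke \Cref{thm:uniform} to collapse this graph into a single \logtimeunif $\TC^0$ circuit family of the desired size and depth.

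The decomposition I would use writes each output coordinate as $y_i = b_i + \sum_{j=1}^a W_{ij} x_j$. This requires three kinds of building blocks: (i) lookup circuits that, given indices $i$ and $j$, emit the $p$-bit entries $W_{ij}$ and $b_i$; (ii) binary $p$-precision multiplications $W_{ij} \cdot x_j$; and (iii) an iterated addition of the $a+1$ resulting summands, for each $i$. Block (i) is \logtimeunif $\TC^0$ by the hypothesis that $\mathbf{W}$ and $\mathbf{b}$ are themselves \logtimeunif computation graphs; block (ii) is \logtimeunif $\TC^0$ because binary multiplication of $p$-precision numbers with $p = \O(\log n)$ is known to lie in that class \citep{hesse2001division}; and block (iii) is \logtimeunif $\TC^0$ because iterated addition is \citep{merrill2023parallelism}. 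Each of these subcircuits has size polynomial in $p$ and hence in $n$.

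Assembling these pieces gives a constant-depth computation graph with $\O(ab)$ multiplication nodes and $b$ iterated-addition nodes. Applying \Cref{thm:uniform} then produces a \logtimeunif $\TC^0$ circuit family whose depth is the product of the (constant) depth of the computation graph and the (constant) maximum depth of the node circuits, and whose size is bounded by $\size_\calG(n) \cdot \max_\calF \size_\calF(n)$, which is polynomial in $a$ and $b$ (and hence in $n$ since $a, b \leq \poly(n)$).

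The main technical worry is verifying the preconditions of \Cref{thm:uniform}: the node-type circuits must have sizes upper-bounded by a power-of-two function computable in $\O(\log n)$ time, which is supplied by \Cref{lem:components-size,lem:components-pad}, and the enclosing computation graph must be \logtimeunif so that gate indices can be decoded to (node index, offset) pairs in $\O(\log n)$ time. Both reduce to bit-shifts and arithmetic on $\O(\log n)$-bit integers encoding the row and column of $\mathbf{W}$ that a given multiplication node consumes, which is well within our uniformity budget, so no essentially new argument is needed beyond the indexing bookkeeping already used when describing transformer columns.
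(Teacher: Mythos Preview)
Your proposal is correct and follows essentially the same approach as the paper: decompose each output coordinate into lookups of $W_{ij}, b_i$ (via the log-uniformity of $\mathbf W, \mathbf b$), pairwise $p$-precision multiplications, and an iterated addition, each of which lies in \logtimeunif $\TC^0$. The only stylistic difference is that you explicitly route the composition through \Cref{thm:uniform} and its padding preconditions, whereas the paper argues the composition more directly without invoking that corollary; both are valid and rest on the same three primitives.
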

\begin{proof}
We first use the uniformity of $\mathbf W, \mathbf b$ to construct them in $\O(\log n)$ time.
For the transformation $\mathbf W \mathbf x + \mathbf b$, first compute each $\mathbf w_i \odot \mathbf x$ in parallel, where $\odot$ represents elementwise multiplication. Since binary multiplication over polynomial-size numbers is in \logtimeunif $\TC^0$, this can be done in parallel with \logtimeunif $\TC^0$ circuits. We then use $b$ \logtimeunif, constant-depth, poly-size threshold circuit families, each corresponding to an output index, that compute the sum over the $a$ entries of each $\mathbf w_i \odot \mathbf x$. The affine transformation corresponds to the composition of these two steps, and is thus computable by a \logtimeunif $\TC^0$ circuit family.
\end{proof}

\subsection{Layer Norm}

The layer norm is applied between sublayers in the transformer. Let $\mu = (1/d) \sum_{i=1}^d x_i$. The layer norm $\mathbf y \in \mathbb D_p^m$ of a vector $\mathbf x \in \mathbb D_p^m$ is computed, for scalars $a,b \in \mathbb D_p$,
\begin{align*}
    \mathbf y &= a \left( \frac{\mathbf x - \mu}{\norm{\mathbf x - \mu}} \right) + b .
\end{align*}
\begin{lemma} \label{lem:layer-norm}
If $a,b$ are \logtimeunif, the layer norm over a vector of size $m$ can be computed by a \logtimeunif threshold circuit family of constant depth and size polynomial in $m$.
\end{lemma}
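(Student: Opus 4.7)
The plan is to view the layer norm as a constant-size computation graph over $p$-precision arithmetic primitives, show each primitive is in \logtimeunif $\TC^0$, and then invoke \Cref{thm:uniform} (or the preceding \Cref{cor:comp-graphs-as-XC-circuits}) to conclude. Concretely, I would decompose the computation of $\mathbf y$ from $\mathbf x$, $a$, $b$ into the following constant-depth sequence of operations: (i) construct the parameters $a, b$ using the assumed log-uniformity of $\theta_\calT$; (ii) compute $\mu = (1/m) \sum_{i=1}^m x_i$ by iterated summation followed by a single division; (iii) compute the centered vector $\mathbf c = \mathbf x - \mu$ by $m$ parallel subtractions (an affine transformation, handled by \Cref{lem:affine}); (iv) compute $\norm{\mathbf c} = \sqrt{\sum_{i=1}^m c_i^2}$ by $m$ parallel squarings, an iterated sum, and a square root; (v) compute $\mathbf c / \norm{\mathbf c}$ by $m$ parallel divisions; (vi) compute $a \cdot (\mathbf c / \norm{\mathbf c}) + b$ by $m$ parallel scalar multiplies and adds.

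Next, I would check each primitive is \logtimeunif $\TC^0$ at $p = \O(\log n)$ precision. Binary addition, subtraction, and multiplication of two $p$-bit numbers, iterated addition of $m$ such numbers, and binary division are all in \logtimeunif $\TC^0$; the paper already invokes \citet{hesse2001division} and \citet{merrill2023parallelism} for exactly these facts in the proofs of \Cref{lem:self-attention,lem:affine}. Square root of a $p$-bit number is likewise in \logtimeunif $\TC^0$ (it reduces to iterated multiplication/division, and is in fact $\FOM$-definable; alternatively, one can compute $1/\sqrt{\cdot}$ directly by Newton iteration implemented as a constant-depth threshold circuit). The construction of the scalars $a, b$ as threshold circuits follows from the \logtimeunif assumption on $\theta_\calT$, just as in \Cref{lem:embedding}.

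Finally, I would assemble these pieces. The decomposition above is a computation graph of constant size (independent of $m$ and $n$), whose node types are the primitives listed, each computable by a \logtimeunif $\TC^0$ family whose size is polynomial in $m$ (since $p = \O(\log n)$ and the vector width is $m$). Applying \Cref{cor:comp-graphs-as-XC-circuits} (equivalently, \Cref{thm:uniform} with a trivial block mapping, since the top-level graph has fixed size) yields a \logtimeunif, constant-depth threshold circuit family for layer norm whose size is polynomial in $m$, as required.

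The main obstacle is simply bookkeeping: verifying that square root (not previously used in the paper) is indeed in \logtimeunif $\TC^0$ at $p$-precision with size polynomial in $m$, and that the repeated use of \emph{log-time} uniformity (as opposed to log-space) does not break when composing these primitives. Both concerns are handled by the same results already used elsewhere in \Cref{app:components-uniform}, so the proof is essentially an application of \Cref{thm:uniform} to a carefully chosen decomposition rather than any new technical machinery.
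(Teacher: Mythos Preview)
Your proposal is correct and follows essentially the same decomposition as the paper: compute $\mu$ via iterated addition and division \citep{hesse2001division}, subtract to center, compute the norm, divide entrywise, and finish with the affine map via \Cref{lem:affine}. The only minor difference is that where you break the norm into squarings, an iterated sum, and a square root (and flag square root as the one primitive needing separate verification), the paper dispatches the entire norm computation by citing \citet{hunter2010computing} for $\norm{\mathbf x - \mu}$ being in \logtimeunif $\TC^0$.
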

\begin{proof}
First compute $m$ using summation over the constant term $1$ from $1$ to $m$. This summation can be computed by a \logtimeunif constant-depth threshold circuit family of size polynomial in $m$. Then compute the sum over $\mathbf x$ using a similar circuit, and divide them to get $\mu$, using the fact that integer division is in \logtimeunif $\TC^0$ \citep{hesse2001division}. We can then compute $\mathbf x - \mu$ in \logtimeunif $\TC^0$.

At this point, we can compute $\norm{\mathbf x - \mu}$ in \logtimeunif $\TC^0$ \citep{hunter2010computing}, then divide each $\mathbf x - \mu$ by the norm in \logtimeunif $\TC^0$, and then apply the final affine transformation in \logtimeunif $\TC^0$ (\Cref{lem:affine}). Thus, computing layer norm is in \logtimeunif $\TC^0$.
\end{proof}

\section{Arithmetic Complexity}

\begin{lemma} \label{lem:division}
Given an $m$-bit integer $a$ and $n$-bit integer $b$, we can compute the quotient $\floor{a / b}$ and remainder $a \bmod b$ in time $\O(mn)$.
\end{lemma}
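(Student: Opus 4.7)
The plan is to implement schoolbook long division on a deterministic Turing machine and verify that the running time fits within $\O(mn)$. I will maintain two tapes (or designated tape regions): a \emph{quotient} register, initially empty and ultimately of length at most $m$, and a \emph{running remainder} register $r$, initially $0$ and never exceeding $n+1$ bits. I will process the bits of $a$ from most significant to least significant over $m$ iterations. At iteration $t$, I (i) shift $r$ left by one bit and append bit $a_{m-t}$ to its low end, (ii) compare $r$ against $b$, and (iii) if $r \geq b$, replace $r$ by $r - b$ and write a $1$ to the next quotient position, otherwise write a $0$.

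Each iteration involves only operations on integers of bit-length at most $n+1$: a single-bit shift-and-append (constant amortized time if the head is parked at the appropriate end, or at worst $\O(n)$ time per step on a single-tape TM), one comparison of two $(n{+}1)$-bit integers, and one conditional subtraction. A standard two-tape simulation of comparison scans the two registers once in $\O(n)$ time, and subtraction of $b$ from $r$ also takes $\O(n)$ time since carries propagate once across at most $n+1$ bit positions. Summing over the $m$ iterations gives total time $\O(mn)$, and after the final iteration the quotient register holds $\lfloor a/b \rfloor$ (of length at most $m$) while $r$ holds $a \bmod b$ (of length at most $n$), as required.

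Correctness follows from the standard invariant that after iteration $t$, the quotient register stores $\lfloor a_{\textrm{top-}t}/b \rfloor$ and $r$ stores $a_{\textrm{top-}t} \bmod b$, where $a_{\textrm{top-}t}$ denotes the integer formed by the top $t$ bits of $a$; this invariant is preserved by the shift-and-subtract step since $2r + a_{m-t}$ is always less than $2b$ when $r < b$, so a single conditional subtraction suffices. The only subtle point is the Turing machine bookkeeping: I must arrange the tape layout so that shifting $r$ and appending the next bit of $a$ does not cost $\Omega(n^2)$ per iteration. Using a dedicated tape for $r$ with the head parked at the low-order end, and scanning $a$ from a separate tape, both operations stay within $\O(n)$ per step. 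This is the only potential obstacle, and it is resolved by a careful choice of tape layout; with that fixed, the $\O(mn)$ bound is immediate.
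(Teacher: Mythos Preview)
Your proof is correct but takes a genuinely different route from the paper's. The paper does not implement division directly: instead it invokes a result of \citet{brent2010modern} that bounds division time by multiplication time, namely $D(m+n,n) \leq \O(M(m,n))$, and then appeals to the schoolbook multiplication bound $M(m,n) = \O(mn)$ to conclude $D(m,n) \leq D(m+n,n) \leq \O(mn)$. Your argument, by contrast, spells out the schoolbook long-division algorithm explicitly, verifies the invariant $r < b$ so that the running remainder never exceeds $n+1$ bits, and counts $\O(n)$ work per iteration over $m$ iterations. Your approach is more elementary and self-contained (no external reference needed), and it makes the Turing-machine implementation details explicit, which is a good fit given that the lemma is only ever applied in the paper with $b$ of constant size to justify $\O(\log n)$-time uniformity computations. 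The paper's approach is terser but relies on the reader trusting the cited reduction.
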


\begin{proof}
Let $D(m, n)$ and $M(m, n)$ denote, respectively, the time complexity of dividing and multiplying an $m$-bit integer by an $n$-bit integer.
\citet{brent2010modern} give the following fact: $D(m + n, n) \leq \O(M(m, n))$. With the goal of analyzing $D(m, n)$, we apply this as follows:
\begin{align*}
    D(m, n)
    &\leq D(m + n, n) \\
    &\leq \O(M(m, n)) \\
    &\leq \O(mn) . \qedhere
\end{align*}
\end{proof}

Applying \Cref{lem:division} when $a$ has size $\O(\log n)$ and $b$ has size $\O(1)$ says that we can do division in time $\O(\log n)$.

\end{document}